\ifcvprfinal\pagestyle{empty}\fi
\begin{document}

\title{Self-calibration-based Approach to Critical Motion Sequences of\\
Rolling-shutter Structure from Motion}

\author{Eisuke Ito and Takayuki Okatani\\
Graduate School of Information Sciences, Tohoku University\\
{\tt\small \{ito,okatani\}@vision.is.tohoku.ac.jp}
}

\def\mat#1{\mathtt #1}
\def\vec#1{\mathbf #1}
\def\gvec#1{\boldsymbol #1}
\def\Eq#1{Eq.(\ref{eqn:#1})}
\def\Fig#1{Fig.\ref{fig:#1}}
\newcommand{\tfbox}[1]{\fboxsep=0pt\fbox{#1}}
\newtheorem{prop}{Proposition}

\maketitle

\begin{abstract}
In this paper we consider critical motion sequences (CMSs) of
rolling-shutter (RS) SfM. Employing an RS camera model with linearized
pure rotation, we show that the RS distortion can be approximately
expressed by two internal parameters of an ``imaginary'' camera plus
one-parameter nonlinear transformation similar to lens distortion. We
then reformulate the problem as self-calibration of the imaginary
camera, in which its skew and aspect ratio are unknown and varying in
the image sequence. In the formulation, we derive a general
representation of CMSs. We also show that our method can explain the
CMS that was recently reported in the literature, and then present a
new remedy to deal with the degeneracy. Our theoretical results agree
well with experimental results; it explains degeneracies observed when
we employ naive bundle adjustment, and how they are resolved by our
method. 
\end{abstract}

\section{Introduction}

Most consumer imaging devices such as smartphones and action cameras
use CMOS sensors, the majority of which use a rolling-shutter. 
In recent years, increasing attention has been paid to performing SfM
(structure-from-motion) from multi-view images with distortion created
by a rolling shutter (RS). A number of methods have been proposed that
can deal with RS distortion in several different problem settings
\cite{meingastgeometric2005,ait-aidersimultaneous2006,hedborgstructure2011,forssénrectifying2010,alblr6p-rolling2015}. The
standard formulation is RS bundle adjustment (BA), which is to assume
a model of RS distortion and optimize its model parameters together
with standard camera parameters in the framework of bundle adjustment
\cite{hedborgstructure2011,hedborgrolling2012,saurer2016cvpr}.

Despite the many studies of RS SfM, there were few studies about its
degeneracy. An only exception is the recent study of Albl et
al. \cite{Albleccv2016}. They showed that there is a degenerate case
in RS SfM and then they explained the mechanism of why it
arises. However, the study only deals with a single, {\em specific}
case of degeneracy. Thus, there remain several open questions, for
instance, {\em are there more degeneracy and critical cases?}

A central motivation behind this study is to answer such questions. We
wish to derive {\em general} representation of degeneracy in RS
SfM. However, it is not a simple task, since RS SfM is a highly
nonlinear problem with many unknowns. To mitigate this difficulty, in
this paper, we limit our attention to a simple but practically
important type of RS distortion, and then borrow the classical
formulation of self-calibration of a camera, in which CMSs were
previously studied extensively. To be specific, we first assume that
camera motion inducing the RS distortion is pure rotation with a
constant angular velocity, which may be independent of the camera pose
of the viewpoint. Further assuming that the rotation is small and can
be linearized, we then show that the RS distortion can be expressed
by two internal parameters, specifically, skew and aspect ratio, of an
{\em imaginary} camera, along with a parameter of a nonlinear
transformation that is similar to (and may be regarded as a special
type of) lens distortion. Assuming this model, we show that RS SfM can
be recasted as self-calibration of the imaginary camera in the case
where the above parameters are unknown and varying in an image
sequence. In this self-calibration formulation, we will derive a
general representation of critical motion sequences (CMSs). We also
derive an explicitly represented CMS, which coincides with the one
derived in \cite{Albleccv2016}, with a new remedy to deal with it. We
will finally show the usefulness and effectiveness of our approach
through a series of experimental results.



\section{Related work}


A number of studies have been conducted to deal with RS distortion in
single or multi-view images. Depending on the type of problems, they
are classified into absolute pose problem
\cite{ait-aidersimultaneous2006,magerand2012eccv,alblr6p-rolling2015,saurer2015icirs},
relative pose problem \cite{dai2016cvpr}, multi-view optimization
(bundle adjustment)
\cite{hedborgrolling2012,hedborgstructure2011,saurer2016cvpr}, stereo
\cite{ait-aiderstructure2009,saurerrolling2013}, and
rectification/stabilization
\cite{forssénrectifying2010,ringabyefficient2012,jiaprobabilistic2012,grundmanncalibration-free2012}.

The most closely related to our study is the study of Albl et
al. \cite{Albleccv2016}. Their study is the first to show that there
is degeneracy in RS SfM. Specifically, employing the most widely used
RS model with linearized rotation and constant velocity translation,
they showed the existence of a CMS. The CMS is that {\em ``all images
  are captured by cameras with identical y (i.e., RS readout)
  direction in space''}. They proved that these images can also be
explained by RS cameras and a scene all lying in a single plane,
meaning degeneracy.

Our study differs from theirs in that we derive a {\em general}
representation of CMSs, although we assume an RS model of
rotation-only camera motion. We also show that our method can explain
the same CMS as \cite{Albleccv2016} but in a different way, and then
present a new method to cope with it. Another difference is that our
derivation needs approximation beyond the approximation of linearized
rotation. Thus, rigorously speaking, our results state that, at least
in the range where our approximation is effective, RS SfM will suffer
from the same degeneracy. However, considering the nature of the
employed approximations, it will probably have at least instability
beyond this range. These agree well with our experimental results that
will be shown later.

As mentioned earlier, the central idea of our approach is to formulate
the problem as self-calibration of cameras. Self-calibration of cameras
was studied extensively from 1990's to early 2000's
\cite{maybanka1992,heydeneuclidean1997,pollefeysself-calibration1999,heydenminimal1998,heydenflexible1999}.
It had been shown that when internal camera parameters are all unknown
for all images, one cannot resolve projective ambiguity emerging in
reconstruction of SfM. It was then shown that if skew is zero or if at
least one of the internal parameters is unknown but constant among
images, projective ambiguity can be resolved and self-calibration is
feasible. However, these results only guarantee that self-calibration
is feasible if camera motion is general. When camera motion is a CMS,
the problem can become degenerate and self-calibration is no longer
feasible. As CMSs differ for different settings of self-calibration,
only a few settings which are practically important were well studied
\cite{sturmcritical1997,kahlcritical2000}, such as when focal lengths
are unknown and varying and all other parameters are known
\cite{sturmcritical2002}.
In this paper, we consider the case where skew and aspect ratio are
unknown and varying and all others are known. This setting was not
considered previously probably because of lack of its usefulness, and
ours is the first attempt in the literature of self-calibration.



\section{Modeling rolling shutter distortion}

\subsection{Constant motion model}

Let $\vec{X}\equiv[X,Y,Z]^\top$ and $\vec{x}\equiv[x,y,z]^\top$ denote
the world and camera coordinates, respectively. The coordinate
transformation between these two is given by
\begin{equation}
\vec{x}
=\mat{R}
(\vec{X}-\vec{p}),
\label{eqn:euclid}
\end{equation}
where $\mat{R}$ is a rotation matrix and $\vec{p}$ is a 3-vector (the
world coordinates of camera position). Assuming constant camera motion
during frame capture, rolling shutter (RS) distortion is modeled by
\begin{equation}
\begin{bmatrix}
c\\
r\\
1
\end{bmatrix}
\propto \mat{R}(r\gvec{\phi}) \mat{R}\{\vec{X}-(\vec{p}+r\vec{v})\},
\label{eqn:fullmodel0}
\end{equation}
where $c$ and $r$ are column ($x$) and row ($y$) coordinates,
respectively; the shutter is closed in the ascending order of $r$;
$\mat{R}(r\gvec{\phi})$ and $\vec{v}$ represent the rotation matrix and
translation vector of the camera motion;
$\gvec{\phi}=[\phi_1,\phi_2,\phi_3]^\top$ is the axis and angle
representation of the rotation. Note that $\mat{R}$ and $\vec{p}$ are
the camera pose when the shutter closes at $r=0$.

Assuming the angle of $\gvec{\phi}$ to be small, we approximate
\Eq{fullmodel0} as follows:
\begin{equation}
\begin{bmatrix}
c\\
r\\
1
\end{bmatrix}
\propto (\mat{I}+r[\gvec{\phi}]_\times) \mat{R}\{\vec{X}-(\vec{p}+r\vec{v})\},
\label{eqn:fullmodel}
\end{equation}
where
\begin{equation}
[\gvec{\phi}]_\times=
\begin{bmatrix}
0 & -\phi_3 & \phi_2\\
\phi_3 & 0 & -\phi_1\\
-\phi_2 & \phi_1 & 0
\end{bmatrix}.
\end{equation}
This model slightly differs from previous studies (e.g.,
\cite{alblr6p-rolling2015}) in the representation of the translational
component of the camera motion during frame capture. Our model
parameterizes the position of the camera in space independently of its
rotation, which is physically more natural.


\subsection{Rotation-only motion model}

In what follows we consider the case where $\vec{v}$ is small and
negligible. 
This is generally a good approximation unless a camera is close to
objects in a scene. Setting $\vec{v}=\vec{0}$ and using
\Eq{euclid} with $\vec{x}\equiv[x,y,z]^\top$, \Eq{fullmodel} can be
rewritten as follows:
\begin{equation}
\begin{bmatrix}
c\\
r\\
1
\end{bmatrix}
\propto (\mat{I}+r[\gvec{\phi}]_\times) 
\begin{bmatrix}
x\\
y\\
z
\end{bmatrix}
\propto (\mat{I}+r[\gvec{\phi}]_\times) 
\begin{bmatrix}
x'\\
y'\\
1
\end{bmatrix},
\label{eqn:distantcase}
\end{equation}
where $x'\equiv x/z$ and $y'\equiv y/z$. 

\subsection{A two-step model based on affine camera approximation}
\label{sec:affine}

\Eq{distantcase} models the RS distortion induced by rotating a
perspective camera with a constant velocity. Now, we approximate the
camera with an affine camera as
\begin{equation}
\label{eqn:dc2}
\begin{bmatrix}
c\\
r\\
1
\end{bmatrix}
\propto
(\mat{I}+r[\gvec{\phi}]_\times) 
\begin{bmatrix}
x'\\
y'\\
1
\end{bmatrix}
\approx
\begin{bmatrix}
  x' - r\phi_3 y' + r\phi_2 \\
  r\phi_3 x' + y' - r\phi_1 \\
 1
\end{bmatrix}.
\end{equation}
This model captures the ``principal components'' of the RS distortion,
as the approximation will be accurate when $x'$, $y'$, and $r$ are
small. This is true for image points around the principal points in
the case of general cameras or for any points in the case of cameras
with a telephoto lens.  Note, however, that we will {\em not} use this
affine camera approximation for the projection model used in SfM;
regarding the projection from $(c,r)$ to image coordinates $(u,v)$, we
will use a standard perspective camera model, as in \Eq{proj0}. Thus,
in the above approximation we simply neglect the effects of the second
order terms of the image coordinates within the model of RS
distortion. (Another proof without using this affine camera assumption
is given in the supplementary material.) The validity of this
approximated model will be shown experimentally in Section
\ref{sec:exp}.


\Eq{dc2} gives a transformation $f:[x',y']\mapsto [c,r]$. We show that
$f$ can be approximated by a composition of two transformations.


\begin{prop}
When $\phi_1$, $\phi_2$, and $\phi_3$ are small, 
$f:[x',y']\mapsto [c,r]$ can be approximated as
\begin{equation}
f\approx f_p \circ f_d,
\label{eqn:integ}
\end{equation}
where
$f_d:[x',y']\mapsto[x'',y'']$ is defined as
\begin{subequations}
  \label{eqn:distrot}
  \begin{align}
    x''&=
    x' - \phi_3 y'^2,\\
    y''&=
    y' + \phi_3 x'y',
  \end{align}
\end{subequations}
and $f_p:[x'',y'']\mapsto[c,r]$ as
\begin{equation}
  \begin{bmatrix}
    c\\r\\1
  \end{bmatrix}  
  \propto
  \begin{bmatrix}
    1 & \phi_2 & 0 \\
    0 & 1-\phi_1 & 0 \\
    0 & 0 & 1
  \end{bmatrix}
  \begin{bmatrix}
    x''\\y''\\1
  \end{bmatrix}.
  \label{eqn:distskewasp}
\end{equation}
\end{prop}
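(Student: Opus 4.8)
The plan is to write both sides of \Eq{integ} as explicit maps $[x',y']\mapsto[c,r]$, expand each to first order in the small angles $\phi_1,\phi_2,\phi_3$, and verify that the two expansions coincide.

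First I would extract the two scalar relations contained in \Eq{dc2}. Because the third entry of its right-hand side is $1$, the proportionality factor is $1$, and the model becomes $c = x' + r(\phi_2-\phi_3 y')$ together with $r = y' + r(\phi_3 x'-\phi_1)$. The point to notice is that the row coordinate $r$ multiplies the rotation term $r[\gvec{\phi}]_\times$ in \Eq{dc2} and hence appears on both sides, so $f$ is defined only implicitly. The first substantive step is therefore to solve the second relation for $r$, obtaining $r = y'/(1+\phi_1-\phi_3 x')$, and to substitute this back into the first so that $c$ becomes an explicit function of $(x',y')$ alone.

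Next I would linearize in $\gvec{\phi}$. Expanding the denominator as $1/(1+\phi_1-\phi_3 x')=1-\phi_1+\phi_3 x'+O(\phi^2)$ gives $r = y'-\phi_1 y'+\phi_3 x'y'+O(\phi^2)$, and inserting $r=y'+O(\phi)$ into the expression for $c$ gives $c = x'+\phi_2 y'-\phi_3 (y')^2+O(\phi^2)$. This is the canonical first-order form of $f$, and the essential observation is that its terms separate by which parameter they carry: the $\phi_3$ contributions $(-\phi_3 (y')^2,\ \phi_3 x'y')$ are precisely the nonlinear, lens-distortion-like part realized by $f_d$ in \Eq{distrot}, whereas the $\phi_1,\phi_2$ contributions $(\phi_2 y',\ -\phi_1 y')$ are the linear skew/aspect part realized by $f_p$ in \Eq{distskewasp}.

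Finally I would check the composition directly. Feeding $f_d$ into $f_p$ yields $c=(x'-\phi_3 (y')^2)+\phi_2(y'+\phi_3 x'y')$ and $r=(1-\phi_1)(y'+\phi_3 x'y')$; discarding the products $\phi_2\phi_3$ and $\phi_1\phi_3$, which are second order in $\gvec{\phi}$, recovers the canonical form above and establishes \Eq{integ}. I expect the only real obstacle to lie in the first step: correctly eliminating the implicit occurrence of $r$ and then keeping the expansion order consistent, namely first order in $\gvec{\phi}$ while deliberately retaining the quadratic image-coordinate terms, since $f_d$ is itself quadratic in $(x',y')$. What remains is routine algebra.
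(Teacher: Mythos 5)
Your proposal is correct and follows essentially the same route as the paper: both sides of \Eq{integ} are expanded to first order in $\phi_1,\phi_2,\phi_3$ and shown to agree with $c = x'+\phi_2 y'-\phi_3 y'^2$, $r = y'-\phi_1 y'+\phi_3 x'y'$. The only cosmetic difference is in handling the implicit occurrence of $r$ in \Eq{dc2}: you solve the second component exactly for $r = y'/(1+\phi_1-\phi_3 x')$ and expand the denominator, whereas the paper performs the equivalent self-substitution of $r$ into the two components before dropping second-order terms.
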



\begin{proof}
We will show that the approximation (\ref{eqn:integ}) holds true, that
is, its right hand and left hand sides are equal, in the first-order
approximation with respect to $\phi_1$, $\phi_2$, and $\phi_3$. As for
the right hand side, Substituting Eqs.(\ref{eqn:distrot}) into
\Eq{distskewasp} and neglecting the second order terms of $\phi_1$,
$\phi_2$, and $\phi_3$, we have
\begin{subequations}
\label{eqn:cr}
\begin{align}
c&= x'  + \phi_2 y' - \phi_3 y'^2,\\
r&= y'  - \phi_1 y' + \phi_3 x'y'.
\end{align}
\end{subequations}
As for the left hand side, substituting the second component of
\Eq{dc2}, $r= r\phi_3 x' + y' - r\phi_1$, into the first component and
then neglecting the second order terms also gives
(\ref{eqn:cr}a). Conducting similar substitution of $r$ into $r$
itself and neglecting the second order terms yields (\ref{eqn:cr}b).
\end{proof}


\section{Self-calibrating rolling-shutter cameras}

We employ Eqs.(\ref{eqn:integ})-(\ref{eqn:distskewasp}) as our RS
camera model. We first show that SfM based on the model is formulated
as a self-calibration problem.

\subsection{Problem formulation}

\begin{prop}
Suppose SfM from images captured by a camera with known internal
parameters. Assume the RS camera model given by
Eqs.(\ref{eqn:integ})-(\ref{eqn:distskewasp}) with unknown motion
parameters for each image. Then, the SfM problem is equivalent to
self-calibration of an imaginary camera that has unknown, varying skew
and aspect ratio along with varying lens distortion of a special kind.
\end{prop}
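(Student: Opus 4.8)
The plan is to make the "imaginary camera" explicit by identifying the projection that a standard perspective SfM pipeline would fit to the RS-distorted data, and then reading off its internal parameters from the decomposition $f \approx f_p \circ f_d$ established in the previous proposition. The key observation is that $f_d$ is nonlinear in $[x',y']$ but carries no camera-intrinsic content — it is precisely the ``lens distortion of a special kind'' mentioned in the statement — whereas $f_p$ is a linear (projective) map whose matrix has exactly the algebraic structure of an intrinsic calibration matrix with nontrivial skew and aspect ratio. So the strategy is: absorb $f_d$ into a pre-warp of the scene rays, and absorb $f_p$ into the calibration matrix of a fictitious perspective camera.

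\medskip

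\textbf{Step 1 (set up the true projection).} First I would write the full image-formation map for a genuine, calibrated camera: world point $\vec{X}$ maps by \Eq{euclid} to camera coordinates, then to normalized coordinates $[x',y']$, then by $f$ to $[c,r]$, and finally by the known perspective intrinsics $\mat{K}$ to pixel coordinates $[u,v]$ as in the projection model \Eq{proj0}. Since $\mat{K}$ is known, SfM must recover the rotation $\mat{R}$, the position $\vec{p}$, the scene points $\vec{X}$, and the per-image motion parameters $\gvec{\phi}$ hidden inside $f$.

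\medskip

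\textbf{Step 2 (substitute the decomposition and regroup).} Next I would insert $f \approx f_p \circ f_d$ and regroup the composition. The map from $[x',y']$ to $[c,r]$ factors as the nonlinear $f_d$ followed by the linear $\mat{M}_{\gvec{\phi}}$ of \Eq{distskewasp}. The point is that $\mat{M}_{\gvec{\phi}}$ is upper triangular with a unit first diagonal entry, a skew-like off-diagonal entry $\phi_2$, and a second diagonal entry $1-\phi_1 \neq 1$; composing it with the known perspective intrinsics $\mat{K}$ yields a new upper-triangular matrix $\mat{K}' = \mat{K}\,\mat{M}_{\gvec{\phi}}$ whose skew and aspect ratio differ from those of $\mat{K}$ and, crucially, vary from image to image as $\gvec{\phi}$ varies. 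This $\mat{K}'$ is the intrinsic matrix of the imaginary camera. The residual nonlinear factor $f_d$, depending only on $\phi_3$, plays the role of image-varying radial-type lens distortion applied to the normalized rays before the linear projection.

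\medskip

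\textbf{Step 3 (identify the equivalence).} Finally I would argue the equivalence in both directions: any solution of the RS-SfM problem (recovering $\mat{R}$, $\vec{p}$, $\vec{X}$, $\gvec{\phi}$) yields a solution of the self-calibration problem with intrinsics $\mat{K}'$ and distortion parameter $\phi_3$, and conversely, since the map $\gvec{\phi} \mapsto (\mat{K}', f_d)$ is invertible to first order (the skew, aspect, and distortion each recover one component of $\gvec{\phi}$), any self-calibration solution pins down the RS motion. Because $\mat{K}'$ carries two free, varying intrinsic parameters (skew and aspect ratio) while focal length and principal point stay fixed by $\mat{K}$, this is exactly the self-calibration setting advertised in the statement. \textbf{The main obstacle} will be making the ``equivalence'' rigorous rather than merely formal: I must confirm that $f_d$ cannot be re-absorbed into the scene geometry or the linear intrinsics — i.e.\ that it is genuinely distinct from what $\mat{K}'$ can represent — so that the self-calibration of $\mat{K}'$ really is the content of RS-SfM, and I must handle the first-order nature of the decomposition carefully so that the claimed equivalence is understood to hold within the same linearization in $\gvec{\phi}$ already assumed throughout.
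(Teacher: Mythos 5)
Your proposal is correct and follows essentially the same route as the paper: you absorb $f_p$ into the known intrinsics to form the imaginary calibration matrix $\mat{K}\mat{M}_{\gvec{\phi}}$ with image-varying skew $\phi_2$ and aspect ratio $1-\phi_1$, and interpret the residual nonlinear map $f_d$ (parameterized by $\phi_3$) as a special lens distortion, which is exactly the paper's argument (your Step 3 on invertibility of $\gvec{\phi}\mapsto(\mat{K}',f_d)$ is an added rigor the paper leaves implicit). One small detail: the paper points out that the terms $y'^2$ and $x'y'$ in $f_d$ match the \emph{tangential} distortion terms of the standard model of \Eq{lensdist}, not radial ones as you suggest, though this does not affect the argument.
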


\begin{proof}
The first component $f_d$ of our two-step model $f=f_p\circ f_d$ may
be interpreted as a kind of lens distortion. The most widely used
model of lens distortion would be
\begin{subequations}
\small
\label{eqn:lensdist}
\begin{align}
x''&=x'+x'(k_1\lambda^2+k_2\lambda^4)+2p_1x'y'+p_2(\lambda^2+2x'^2)\\
y''&=y'+y'(k_1\lambda^2+k_2\lambda^4)+p_1(\lambda^2+2y'^2)+2p_2x'y',
\end{align}
\end{subequations}
where $x'\equiv x/z$ and $y'\equiv y/z$; $\lambda^2\equiv
x'^2+y'^2$. The second terms on the right hand sides represent radial
distortion and the third terms represent tangential distortion. The
transformation $f_d$ of \Eq{distrot} has a similar form; there are
second-order additive terms $y'^2$ and $x'y'$, which appear in the
tangential distortion terms.

Consider the second component $f_p$ of \Eq{distskewasp}. The row and
column coordinate $[c,r]$ are transformed to image coordinates $[u,v]$
by the matrix $\mat{K}$ of internal parameters as
\begin{equation}
\label{eqn:proj0}
\begin{bmatrix}
u\\v\\1
\end{bmatrix}\propto
\mat{K}
\begin{bmatrix}
c\\r\\1
\end{bmatrix}
\propto
\begin{bmatrix}
f & 0 & u_0\\
0 & f & v_0\\
0 & 0 & 1
\end{bmatrix}
\begin{bmatrix}
c\\r\\1
\end{bmatrix},
\end{equation}
where $f$ and $(u_0,v_0)$ are the focal length and the principal point
of this camera. We assume here zero skew and unit aspect ratio for the
sake of simplicity. The choice of other values will not change the
discussions below. The substitution of \Eq{distskewasp} into the above
yields%
{\small 
\begin{equation}
\begin{bmatrix}
u\\
v\\
1
\end{bmatrix}
\propto
\begin{bmatrix}
f & 0 & u_0\\
0 & f & v_0\\
0 & 0 & 1
\end{bmatrix}
\begin{bmatrix}
1 & \phi_2 & 0\\
0 & 1-\phi_1 & 0\\
0 & 0 & 1
\end{bmatrix}
\begin{bmatrix}
  x''\\
  y''\\
  1
\end{bmatrix}\nonumber\\
=
\begin{bmatrix}
f & \phi_2 f & u_0\\
0 & (1-\phi_1) f & v_0\\
0 & 0 & 1
\end{bmatrix}
\begin{bmatrix}
  x''\\
  y''\\
  1
\end{bmatrix}.
\label{eqn:skewasp}
\end{equation}}
The matrix gives an internal parameter matrix of the imaginary camera;
$\phi_1$ (rigorously $1-\phi_1$) is its aspect ratio, and $\phi_2$ is
its skew.
\end{proof}



\subsection{Theory of self-calibration}


Generally, self-calibration is to estimate (partially) unknown internal
parameters of cameras from only images of a scene and thereby obtain
metric 3D reconstruction of the scene. It was extensively studied from
1990's to early 2000's
\cite{maybanka1992,heydeneuclidean1997,pollefeysself-calibration1999,heydenminimal1998,heydenflexible1999,hartleymultiple2003}.
In what follows, while summarizing these studies, we will describe how
our problem can be dealt with.

\subsubsection{Feasibility}

If the internal parameters of cameras are all unknown, we can obtain
3D reconstruction up to projective ambiguity. It was studied
previously what kind of knowledge about which internal parameters is
necessary to resolve projective ambiguity. It was shown
\cite{pollefeysself-calibration1999,heydenminimal1998} that it is
sufficient if skew vanishes for all the images. It was also shown
\cite{heydenflexible1999} that it suffices that only one of the five
internal parameters is constant (but unknown) for all the images;
others may be unknown and varying for different images. This result
proves that we can indeed perform self-calibration formulated above.


Thus, theoretically, one can conduct self-calibration only if she/he
has a little knowledge about internal parameters. Practically,
however, this is not true because of two reasons. One is the existence
of critical motion sequences (CMSs), which will be discussed
later. The other is that when a large portion of internal parameters
are unknown, it becomes hard to determine them with practical
accuracy. Considering practical requirements, it is the most popular
to assume only focal lengths \cite{pollefeysself-calibration1999} (and
sometimes additionally principal points \cite{kahlcritical2000}) to be
unknown and varying, and all other parameters to be known. It is
interesting to point out that our case is the complete reversal in the
choice of known/unknown parameters.


Additionally, simple counting argument gives a necessary condition for
self-calibration. Let $m$ be the number of cameras (viewpoints), $n_k$
be the number of known internal parameters, and $n_f$ be the number of
constant (but unknown) internal parameters. Then, a necessary
condition is given \cite{hartleymultiple2003} by
\begin{equation}
mn_k + (m-1)n_f \geq 8.
\end{equation}
In our case, $n_k=3$ and $n_f=0$, from which we have $m\geq 3$ as a
necessary condition.

\subsubsection{Critical motion sequences}

Even in cases where the above theories justify the feasibility of
self-calibration, there could emerge degeneracy depending on camera
poses. A set of camera poses for which the parameters cannot be
uniquely determined is called a critical motion sequence (CMS).

CMSs were studied in parallel with the above studies of the feasibility
of self-calibration. A catalog of CMSs for the case of constant
camera parameters is shown in \cite{sturmcritical1997}. There are also
studies for cases where some parameters are known and/or some vary. In
\cite{kahlcritical2000}, each of the cases where skew vanishes,
additionally aspect ratio is known, and further additionally principal
points are known are studied. In
\cite{sturmcritical2002},
a catalog of CMSs is shown for the case where focal length is
unknown and varying, and others are known; this is the most popular
setting of self-calibration.

However, the case considered in this paper, i.e., when skew and aspect
ratio are both unknown and varying whereas all others are known, has not
been considered in the literature. This may be due to lack of
applications of the setting. Thus, we will examine CMSs for this case
in Sec.\ref{sec:CMSs}.






\subsubsection{Lens distortion}

When conducting self-calibration of cameras (or simply bundle
adjustment), it is common to jointly estimate lens distortion (i.e.,
$k_1,\ldots$ and $p_1,\ldots$ in \Eq{lensdist}) together with other
parameters. Researchers have traditionally treated the lens distortion
parameters separately from the five internal camera parameters. In
fact, their uniqueness and existence have not been rigorously
discussed until very recently \cite{wucritical2014}. This treatment is
indeed reasonable considering nonlinear nature of the lens distortion;
projective transformation maps a line onto a line, whereas lens
distortion does not. In this study, adopting the same position and
assuming that the parameter $\phi_3$ of $f_d$ can be treated equally
to lens distortion parameters, we assume that $\phi_3$ can be
determined uniquely independently of other parameters.



\subsection{CMSs for RS camera self-calibration}
\label{sec:CMSs}

We now consider CMSs for our self-calibration problem. We begin with
derivation of general-purpose equations following Sec.19.2 of
\cite{hartleymultiple2003}, and then tailor them for our case.


\subsubsection{Equations of self-calibration}

Suppose that a projective reconstruction $\{\mat{P}^i,\vec{X}_j\}$ is
given, where $\mat{P}^i$ is a projection matrix and $\vec{X}_j$ is a
scene point. Let $\mat{P}^i=[\mat{A}^i\;\vert\; \vec{a}^i]$. By
choosing a coordinate system we may assume that $\mat{A}^1=\mat{I}$
and $\vec{a}^1=\vec{0}$. Let $\mat{K}^i$ be the internal parameter
matrix of camera $i$. The DIAC (dual image of the absolute conic) for
image $i$ is given by $\omega^{*i}=\mat{K}^i\mat{K}^{i\top}$. A
three-dimensional projective transformation $\mat{H}$ that transforms
the projective reconstruction $\{\mat{P}^i,\vec{X}_j\}$ into a metric
reconstruction $\{\mat{P}^i\mat{H},\mat{H}^{-1}\vec{X}_j\}$ can be
encoded as
\begin{equation}
\mat{H}=\begin{bmatrix}
\mat{K}^1 & \vec{0}\\
-\vec{p}^\top\mat{K} & 1
\end{bmatrix}.
\end{equation}
Then, equations of self-calibration $(i=1,\ldots)$ are given by
\begin{equation}
  \omega^{*i}=\left(\mat{A}^i-\vec{a}^i\vec{p}^\top\right)
  \omega^{*1}
\left(\mat{A}^i-\vec{a}^i\vec{p}^\top\right)^\top.
\label{eqn:omega}
\end{equation}
Here, $\omega^{*i}$ $(i=1,\ldots)$ and $\vec{p}$ are unknowns.  If
there is knowledge about some internal parameters, it gives
constraint(s) on the elements of $\omega^{*i}$. 

We now derive such constraints in our case. As the principal point is
known for any image, we can transform image coordinates so that it
will be $(0,0)$. Then the internal parameter matrix will be 
\begin{equation}
\mat{K}^i=\begin{bmatrix}
f^i & s^if^i & 0\\
0   & \alpha^if^i & 0\\
0   & 0           & 1
\end{bmatrix}.
\label{eqn:ourK}
\end{equation}
The DIAC will be
\begin{equation}
\omega^{*i}=\mat{K}^i\mat{K}^{i\top}=\begin{bmatrix}
f^2 + s^2f^2 & s \alpha f^2 & 0\\
s \alpha f^2 & \alpha^2 f^2 & 0\\
0 & 0 & 1
\end{bmatrix},
\end{equation}
where superscripts ($i$'s) are omitted in the matrix. From this we
have constraints on the elements of the DIACs. The elimination of $s$
and $\alpha$ from the upper-left $2\times 2$ block yields
\begin{subequations}
\label{eqn:selfeqs}
\begin{equation}
  \omega^{*i}_{11}\omega^{*i}_{22}=\omega^{*i}_{22}f^2+\omega^{*i2}_{12},
\end{equation}
and the $(1,3)$ and $(2,3)$ elements vanish as
\begin{equation}
  \omega^{*i}_{13}=\omega^{*i}_{23} = 0.
\end{equation}
\end{subequations}
Substituting \Eq{omega} into the above equations to eliminate the
elements of $\omega^{*i}$ ($i\neq 1$), we have equations that have
$\omega^{*1}$ and $\vec{p}$ as only unknowns. Solving them for these
unknowns, we can determine $\mat{H}$, from which we can obtain a
metric reconstruction. 

\subsubsection{Representation of CMSs}
\label{sec:cmsrep}

A CMS is a set of camera poses for which the above equations becomes
degenerate. Specifically, there are three equations for each camera
$i$, which are obtained by substituting \Eq{omega} into
Eqs.(\ref{eqn:selfeqs}) as described above. They differ for different
cameras owing to the difference in $\mat{A}_i$ and
$\vec{a}_i$. However, they can be degenerate if there is some special
relation in $\{\mat{A}_i,\vec{a}_i\}$ among cameras.

This analysis of degeneracy theoretically gives a complete set of
CMSs. Indeed, one can check whether any given camera motion is a CMS
or not. However, it is only implicitly represented, which makes
intuitive understanding hard. This is also the case with other, more
popular self-calibration settings such as the case of constant, unknown
internal parameters \cite{sturmcritical1997} and the case of zero skew
and known aspect ratio \cite{kahlcritical2000}. For such cases,
researchers have derived several practically important
cases. Similarly, we derive an intuitive CMS here, which coincides
with the one given in \cite{Albleccv2016}. Thus, the following gives
yet another explanation of the CMS.

\begin{prop}
All images are captured by cameras having the parallel $y$ axis. This
camera motion is a CMS. The translational components may be arbitrary.
\label{prop3}
\end{prop}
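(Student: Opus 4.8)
The plan is to prove degeneracy \emph{constructively}: rather than analyzing the rank of the implicit system \Eq{selfeqs}, I would exhibit an explicit one-parameter family of genuinely distinct metric reconstructions, all consistent with the same images, that differ by a transformation which is \emph{not} a similarity. Since self-calibration is meant to pin the reconstruction down to a similarity, the existence of such a family shows that the solution $(\omega^{*1},\vec{p})$ of \Eq{omega}--\Eq{selfeqs} is not unique, which is exactly the definition of a CMS. Without loss of generality I would work in a metric frame in which the true cameras are $\mat{P}^i=\mat{K}^i\mat{R}^i[\mat{I}\;\vert\;-\vec{p}^i]$ with $\mat{K}^i$ of the admissible form \Eq{ourK}; producing a second such frame consistent with the data is equivalent to producing a second solution of the self-calibration equations.

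Let $\vec{r}_2$ denote the common world direction of the camera $y$-axis, i.e.\ the (shared) second row of every $\mat{R}^i$. The core of the construction is the anisotropic scaling along this axis,
\begin{equation}
\mat{G}=\mat{I}+\lambda\,\vec{r}_2\vec{r}_2^\top,
\end{equation}
applied to the 3D reconstruction. The key step is to show that deforming every camera by $\mat{G}$ only rescales the \emph{unknown} skew and aspect ratio while leaving the \emph{known} focal length and principal point intact. The parallel-$y$-axis hypothesis enters through the two identities $\mat{R}^i\vec{r}_2=\vec{e}_2$ and $\vec{r}_2^\top=\vec{e}_2^\top\mat{R}^i$, both immediate from $\vec{r}_2$ being the second row of the orthogonal matrix $\mat{R}^i$. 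With them,
\begin{equation}
\mat{R}^i\mat{G}=\bigl(\mat{I}+\lambda\,\vec{e}_2\vec{e}_2^\top\bigr)\mat{R}^i=\mathrm{diag}(1,1+\lambda,1)\,\mat{R}^i,
\end{equation}
so that $\mat{K}^i\mat{R}^i\mat{G}=\bigl[\mat{K}^i\,\mathrm{diag}(1,1+\lambda,1)\bigr]\mat{R}^i$, and the bracketed matrix is again of the form \Eq{ourK} with the same $f$, the same (origin) principal point, and skew and aspect ratio merely rescaled to $(1+\lambda)s^i$ and $(1+\lambda)\alpha^i$.

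It will then follow that each deformed camera has a DIAC still satisfying \Eq{skewasp}--\Eq{selfeqs}, so the whole family indexed by $\lambda$ solves the self-calibration equations; since $\mat{G}$ is not a similarity for $\lambda\neq 0$, the reconstruction is genuinely ambiguous and the motion is a CMS. The translations $\vec{p}^i$ are never used in the argument, so they may be arbitrary, as claimed; and for large $\lambda$ the reconstructed scene flattens onto the plane orthogonal to $\vec{r}_2$, reproducing the planar-scene picture of \cite{Albleccv2016}. I expect the main obstacle to be precisely the verification that the scaling is \emph{fully absorbed} into the free parameters, i.e.\ that no residue leaks into $f$ or the principal point. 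This is exactly where the shared $y$-axis is indispensable: it forces the scaling direction $\vec{r}_2$ to map to the image $y$-axis $\vec{e}_2$ in \emph{every} view, which is the one direction governed entirely by the unknown skew and aspect ratio; without the shared axis the factorization above fails and the deformation would perturb the known focal length.
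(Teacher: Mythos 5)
Your proof is correct and takes essentially the same approach as the paper: both exhibit an anisotropic scaling along the common $y$-axis that is exactly absorbed by rescaling the unknown skew and aspect ratio (leaving the known focal length and principal point intact), thereby producing a one-parameter family of non-similar metric reconstructions consistent with the same images. The only difference is presentational --- the paper fixes a world frame aligned with the common axis and verifies the invariance component-wise (its scale factor $k$ playing the role of your $1+\lambda$), whereas you argue coordinate-free via the identity $\mat{R}^i\mat{G}=\mathrm{diag}(1,1+\lambda,1)\,\mat{R}^i$.
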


\begin{proof}
We denote rotation about the $y$ axis with angle $\varphi$ by
$\mat{R}_y(\varphi)$. Choosing the world coordinate system, we may
write the image projection of a scene point $[X,Y,Z]^\top$ as
\noindent
\begin{equation}
\small
\mat{K}
\left(\mat{R}_y(\varphi) \begin{bmatrix}
X\\Y\\Z
\end{bmatrix}
+ \begin{bmatrix}
t_X\\t_Y\\t_Z
\end{bmatrix}
\right)
=
\begin{bmatrix}
f(X\cos\varphi +Z\sin\varphi +t_X) + f s(Y+t_Y)\\
f \alpha (Y+t_Y)\\
-X\sin\varphi  + Z\cos\varphi +t_Z
\end{bmatrix},
\end{equation}%
where $\mat{K}$ is parameterized as in \Eq{ourK}. It is observed that
the image will not change if we multiply the $Y$ coordinate by $k$ and
both $s$ and $\alpha$ by $1/k$, as $s(Y+t_Y)=sk^{-1}\cdot k(Y+t_Y)$
and $\alpha(Y+t_Y)=\alpha k^{-1}\cdot k(Y+t_Y)$. This means that we
cannot determine the scale $k$ from only these images, and this camera
motion is a CMS.
\end{proof}

An example of this CMS is the case where one acquires images while
holding a camera horizontally with zero elevation angle. This style of
capturing images is quite common. Even if a camera motion does not
exactly match this CMS, if it is somewhat close to it, estimation
accuracy could deteriorate depending on how close it is; see
\cite{Albleccv2016} for more detailed discussions.

\subsubsection{A method for resolving the CMS}
\label{sec:cmsremedy}

Besides simply avoiding it, there are several possible ways to cope
with this CMS. We propose to use {\em some} method outside the
framework of self-calibration to determine either $\phi_1$ or $\phi_2$
for a {\em single} camera that is selected somehow. Specifying only a
single value resolves the ambiguous scale $k$ mentioned above,
resolving the ambiguity of solutions. A practically easy way to do
this is to find out an image $i$ in the sequence that undergoes no RS
distortion (or as small distortion as possible), for which we set
$\phi_1^{i}=0$ and/or $\phi_2^{i}=0$. We will show through experiments
that this is indeed effective, provided that there exists such a
distortion-free image in the sequence and it can be identified.
Besides finding a viewpoint with $\phi_1^{i}=0$ or $\phi_2^{i}=0$, we
will be able to employ various methods depending on external
information available. 



\section{Experimental results}
\label{sec:exp}


Our self-calibration-based formulation of RS SfM can be used not only
for deriving CMSs but also for performing RS SfM in practice. To
demonstrate how degeneracy arising in RS SfM can be treated by our
method and also to evaluate the validity of the employed approximation
of the RS model, we apply it to RS images and compare the results with
the ground truths and those obtained by standard BA with/without the
standard rotation-only linearized RS camera model etc.

\subsection{Incorporating the RS camera model to SfM}

The proposed RS model can be incorporated into bundle adjustment in
the following way. The projection of a scene point $\vec{X}$ when our
RS model is assumed is as follows: $\vec{X}$ is first transformed by
\Eq{euclid} into the camera coordinates $\vec{x}$, and then
transformed by Eqs.(\ref{eqn:distrot}) and \Eq{skewasp} in turn to
yield image coordinates $[u,v]$. We assume the (genuine) internal
parameters (i.e., $\mat{K}$ of \Eq{proj0}) of each camera to be
known. Thus, unknowns per each camera are the six parameters of camera
pose plus the three RS parameters $[\phi_1,\phi_2,\phi_3]$. Bundle
adjustment is performed over these parameters for all the cameras in
addition to the parameters of point cloud. We used ceres-solver
\cite{ceressolver} for implementing bundle adjustment in our
experiments. The initial values of the RS parameters are set to 0,
since their true values should be small due to the nature of RS
distortion. The initial values of camera poses and point cloud are
obtained by using these initial values for the RS parameters.





\subsection{Synthetic image experiments}





\begin{figure}[bt]
\hfil
\hspace*{-1.5mm}
\hbox{
\tfbox{\includegraphics[width=20mm]{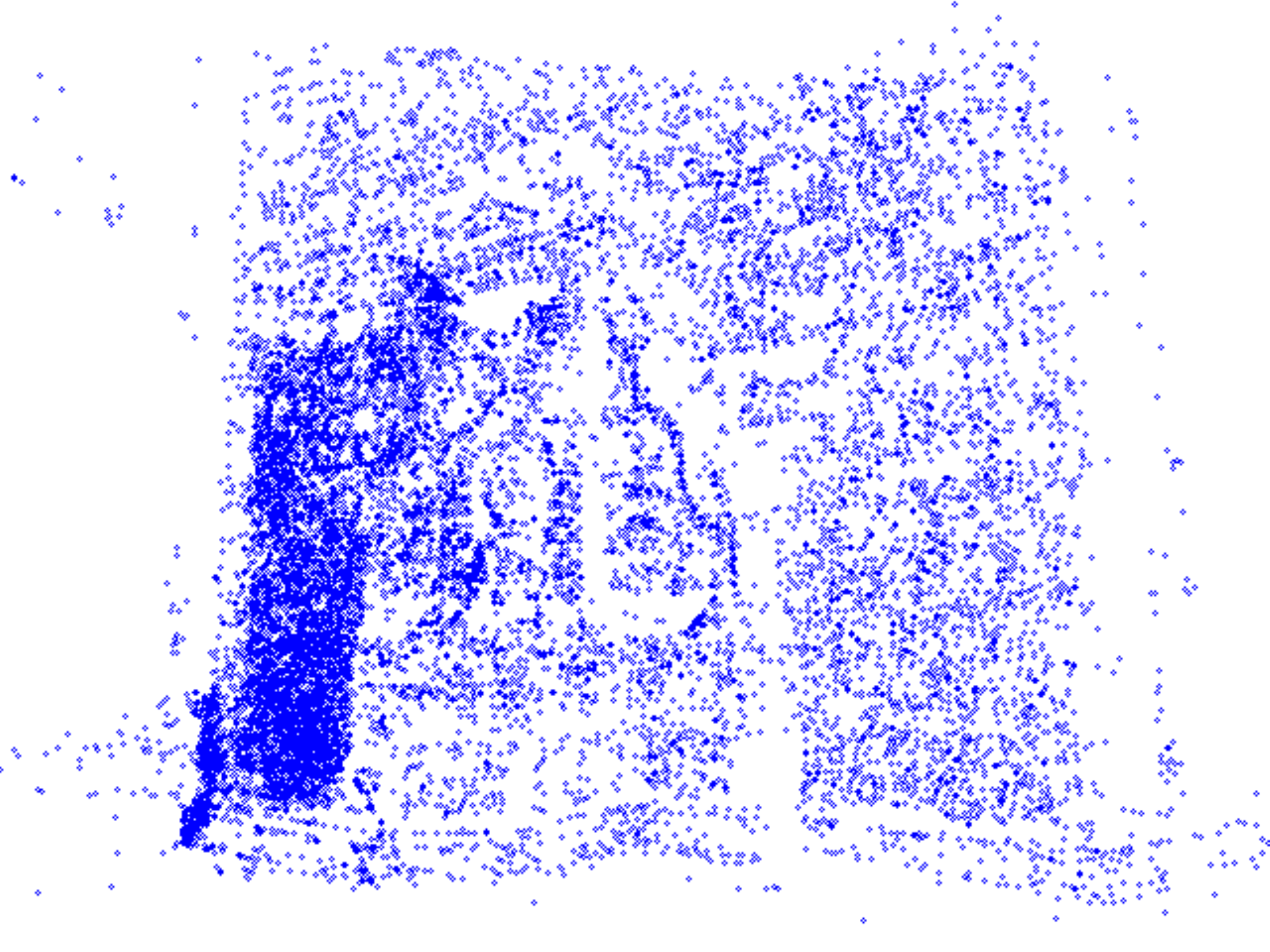}}
\tfbox{\includegraphics[width=20mm]{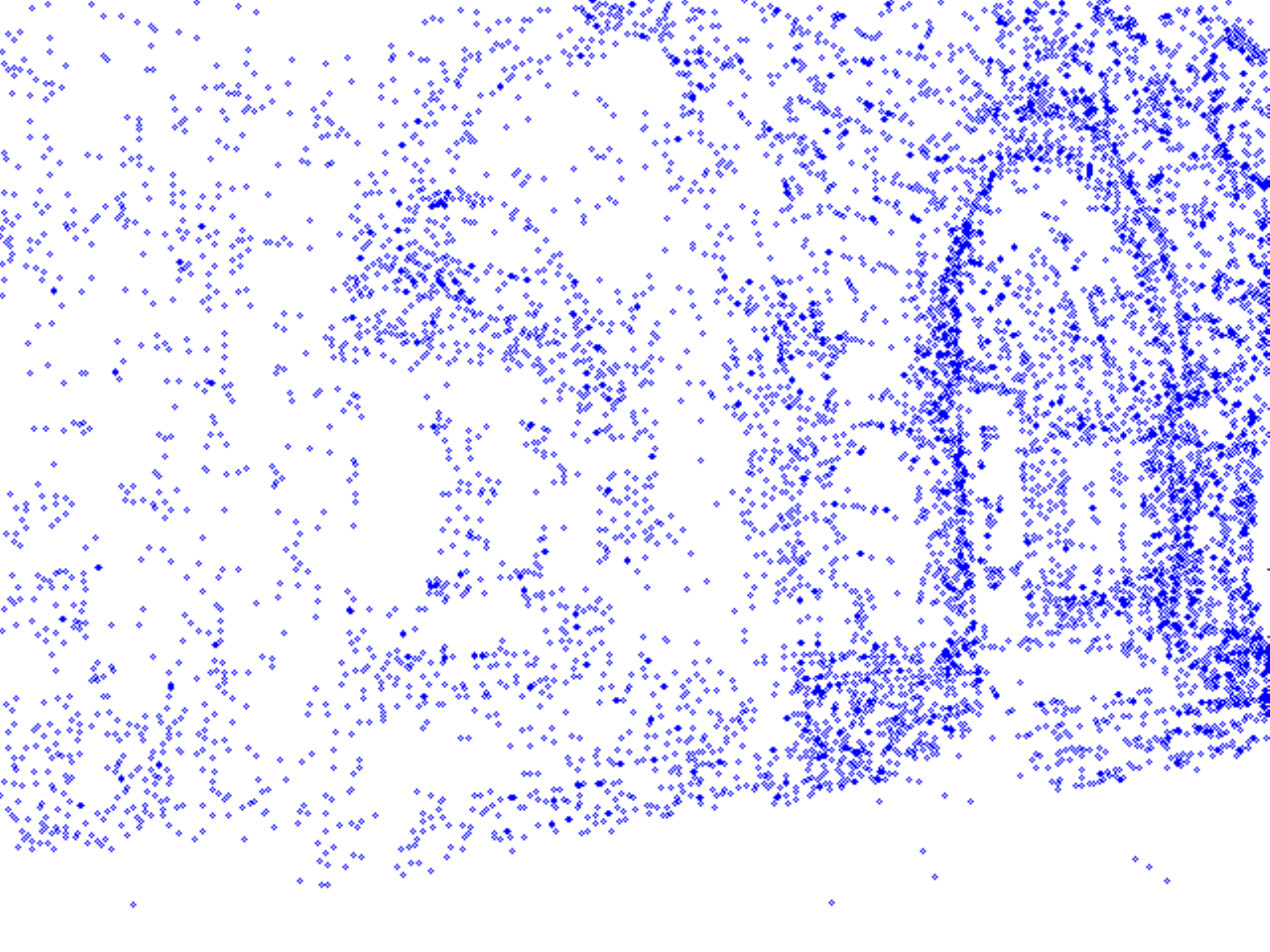}}
~\tfbox{\includegraphics[width=20mm]{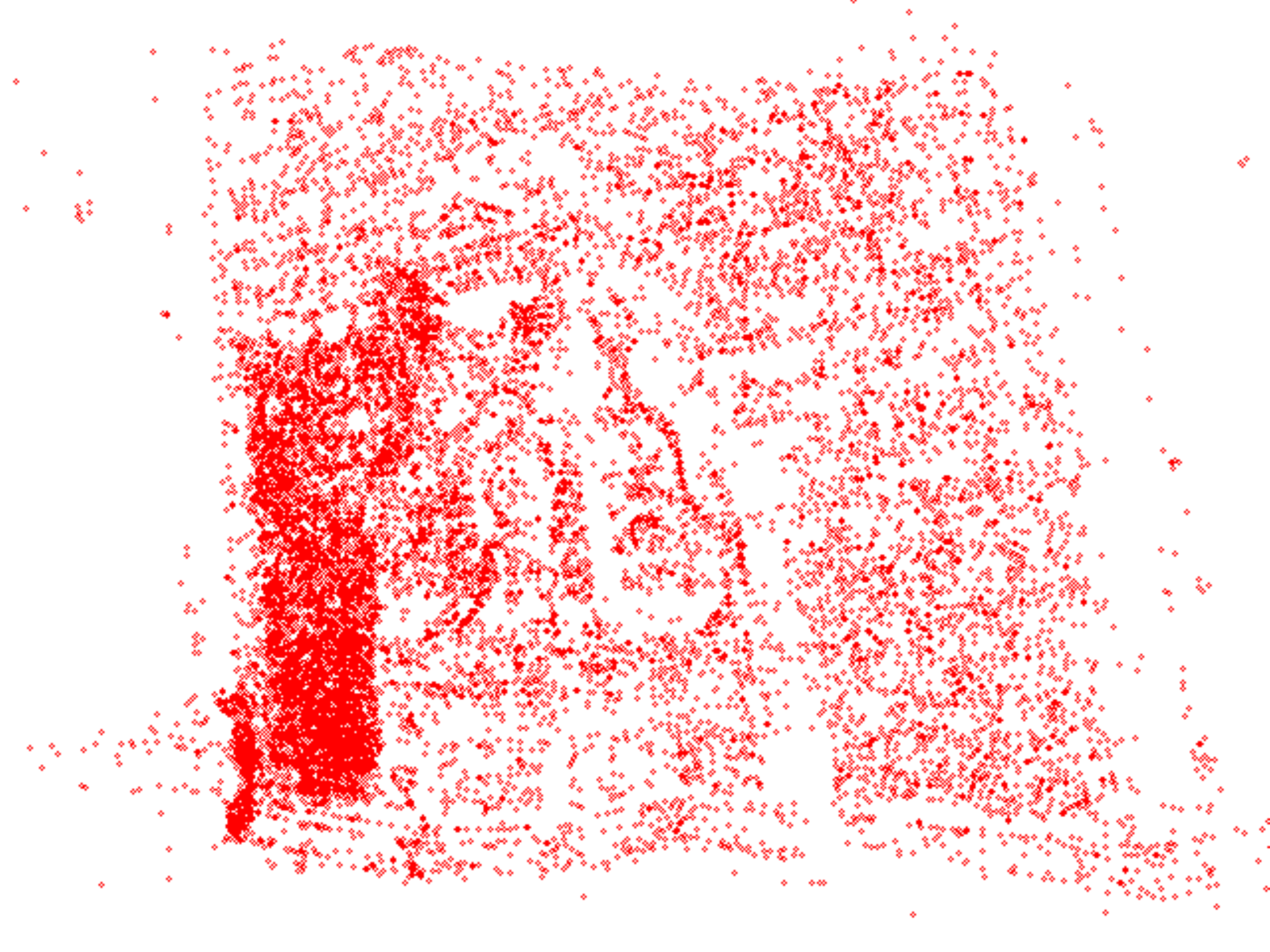}}
\tfbox{\includegraphics[width=20mm]{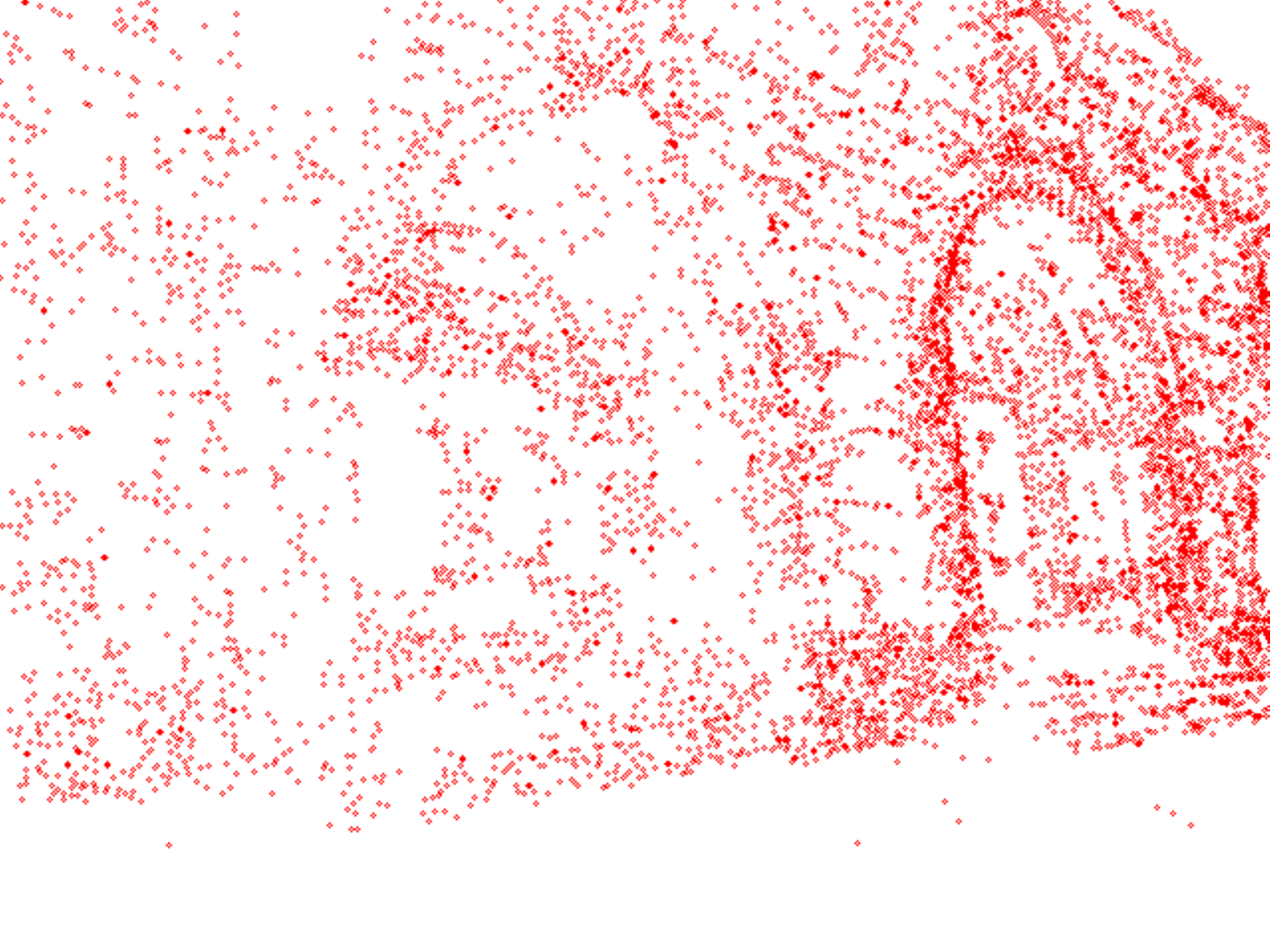}}}
\caption{Example of images of points used in Experiment I. Upper: Two
  original images of fountain-P11. Lower: With simulated RS
  distortion.}
\label{fig:rsexample}
\end{figure}

We first show results using synthetic images, for which we can
evaluate accuracy of 3D reconstructions using their ground truths.  In
the experiment, for given point cloud (scene points) and camera poses,
we synthetically generate images by projecting the points onto each
camera with the RS camera model (\ref{eqn:fullmodel0});
\Fig{rsexample} shows examples. RS distortion can generally affect
every step of the SfM pipeline, i.e., matching points among images,
computing initial values of camera poses as well as point cloud, and
finally bundle adjustment. As the proposed method is only concerned
with the last step, this experiment enables to evaluate its
effectiveness independently of others.


The details are as follows. To obtain point cloud and camera poses
that simulate natural scene structures and camera poses, we used
Visual SFM \cite{vsfm,wutowards2013} to reconstruct them from real
images. We used several image sequences listed in Table \ref{tbl:seqs}
from public datasets.
\begin{table}[t]
\small
\caption{Image sequences used for Experiment I and II.}
\label{tbl:seqs}
\centering
\begin{tabular}{l|ll}\hline
Sequence & Dataset & \# of images\\ \hline
fountain-P11 & EPFL\cite{strechaon2008}\footnote{http://cvlabwww.epfl.ch/data/multiview/denseMVS.html} & 11\\
Herz-Jesu-P8 & EPFL & 8\\
castle-P19 & EPFL & 19\\
Temple & Middleburry\footnote{http://vision.middleburry.edu/mview/data/} & 10(templeR0014-0023)\\\hline
\end{tabular}
\end{table}
We then projected the reconstructed point cloud onto the images of the
reconstructed cameras using the RS camera model
(\ref{eqn:fullmodel0}). The camera motion for each image was randomly
generated. For the rotation $R(r\gvec{\phi})$, the axis
$\gvec{\phi}/\lvert\gvec{\phi}\rvert$ was generated in a full sphere
with a uniform distribution and the angular velocity
$\lvert\gvec{\phi}\rvert$ was set so that
$(r_{max}-r_{min})\lvert\gvec{\phi}\rvert$ equals to a random variable
generated from a Gaussian distribution $N(0,\sigma_{rot}^2)$, where
$r_{max}$ and $r_{min}$ are the top and bottom rows of images. For the
translation, each of its three elements was generated according to
$N(0,(\sigma_{trans}\bar{t})^2)$, where $\bar{t}$ is the average
distance between consecutive camera positions in the sequence. We set
$\sigma_{rot}=0.05$ radians and $\sigma_{trans}=0.05$. We used the
same internal camera parameters as the original reconstruction. We
added Gaussian noises $\varepsilon, \varepsilon'\sim N(0,0.5^2)$ to
the $x$ and $y$ coordinates of each image point.

We conducted the following procedure for 100 trials for each of the
image sequences listed in Table \ref{tbl:seqs}. In each trial, we
regenerated the additive image noises and initial values for bundle
adjustment. RS distortion for each image (except the first image) of
each sequence was randomly generated once and fixed throughout the
trials. We intentionally gave {\em no distortion to the first
  image}. 

We applied four methods to the data thus generated. The first one is
ordinary bundle adjustment without any RS camera model, which is
referred to as ``w/o RS.'' The second and third ones are bundle
adjustment incorporating the proposed RS camera model. The second one
is to optimize all the RS parameters equally in bundle adjustment,
which is referred to as ``w/ RS.''  The third one is to set
$\phi_1^{1}=0$ and optimize all others, which is referred to as ``w/
RS*''. This implements the proposed approach of resolving the CMS
mentioned in Sec.\ref{sec:cmsremedy}. The last one is bundle
adjustment incorporating the exact RS model with linearized rotation
(\ref{eqn:distantcase}), which is referred to as ``w/
RS$(r[\phi]_\times])$''.

Figure \ref{fig:pointexp} shows the results. The plots show cumulative
error histograms of rotation and translation of cameras and of
structure (scene points). The rotation error is measured by the
average of the absolute angle of $\mat{R}^i\hat{\mat{R}^i}$ over the
viewpoints, where $\mat{R}^i$ and $\hat{\mat{R}^i}$ are the true and
estimated camera orientations, respectively. To eliminate scaling
ambiguity for the evaluation of translation and structure, we apply a
similarity transformation to them so that the camera trajectory 
be maximally close to the true one in the least-squares sense. Then
the translation error is measured by the average of differences
between the true camera position $\vec{p}^i$ and the estimated one
$\hat{\vec{p}^i}$ over the viewpoints. The structure error is measured
by the sum of distances between true points and their estimated
counterparts.
\begin{figure*}[t]
\tiny
\hspace*{-1.5mm}
\begin{tabular}{cccc}
\includegraphics[width=43.0mm]{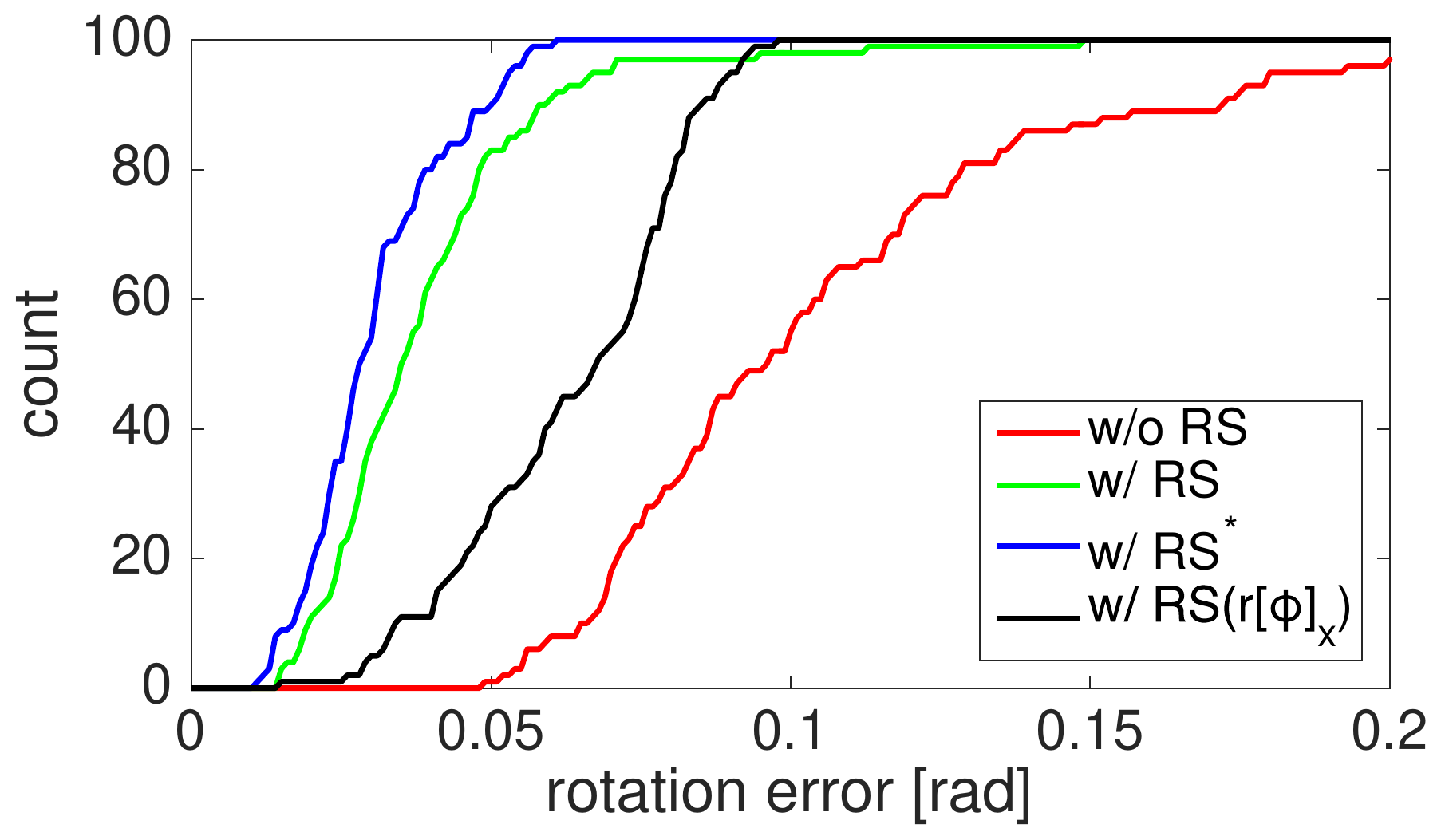}
\hspace*{-4mm}&\includegraphics[width=43.0mm]{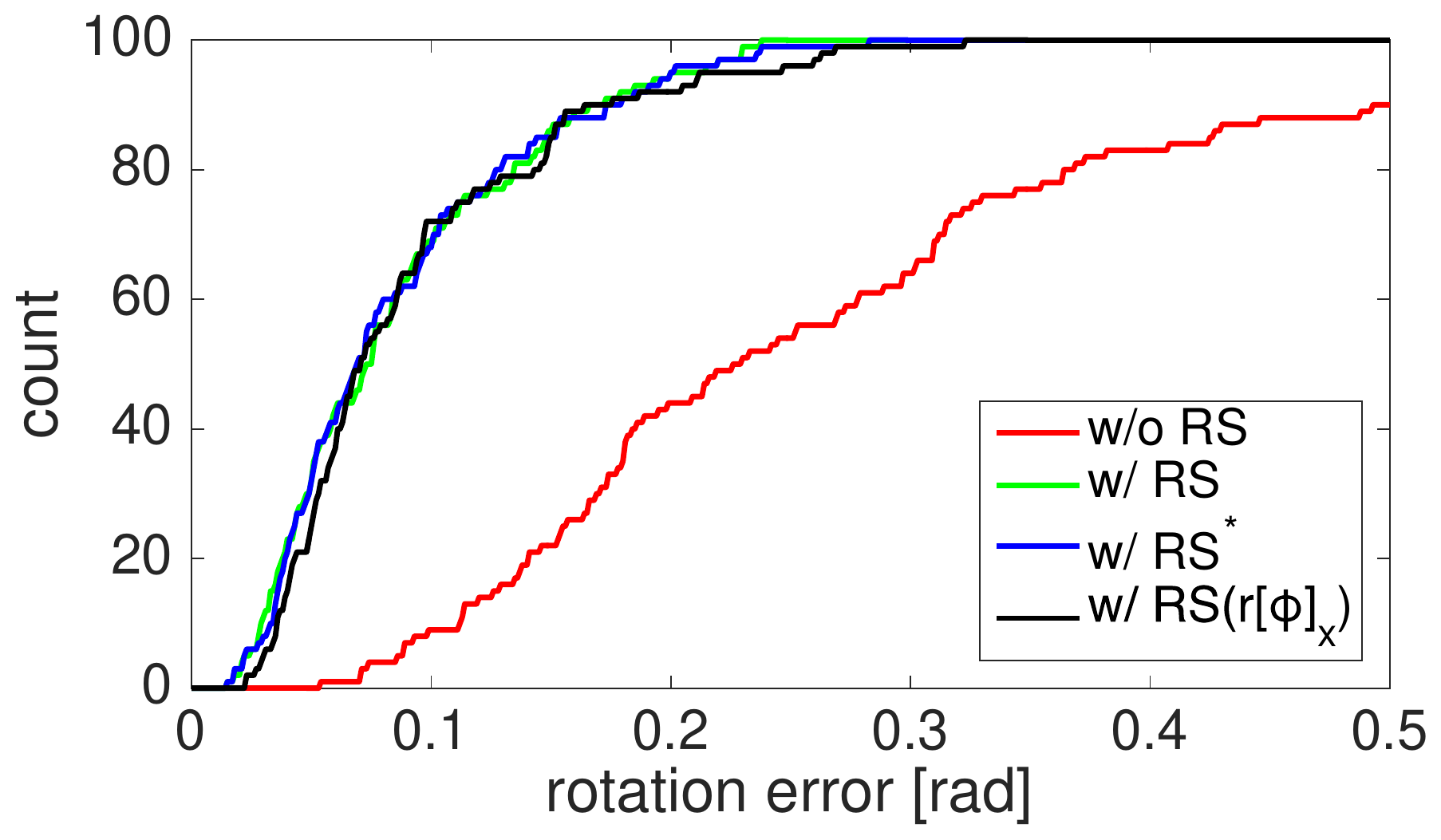}
\hspace*{-4mm}&\includegraphics[width=43.0mm]{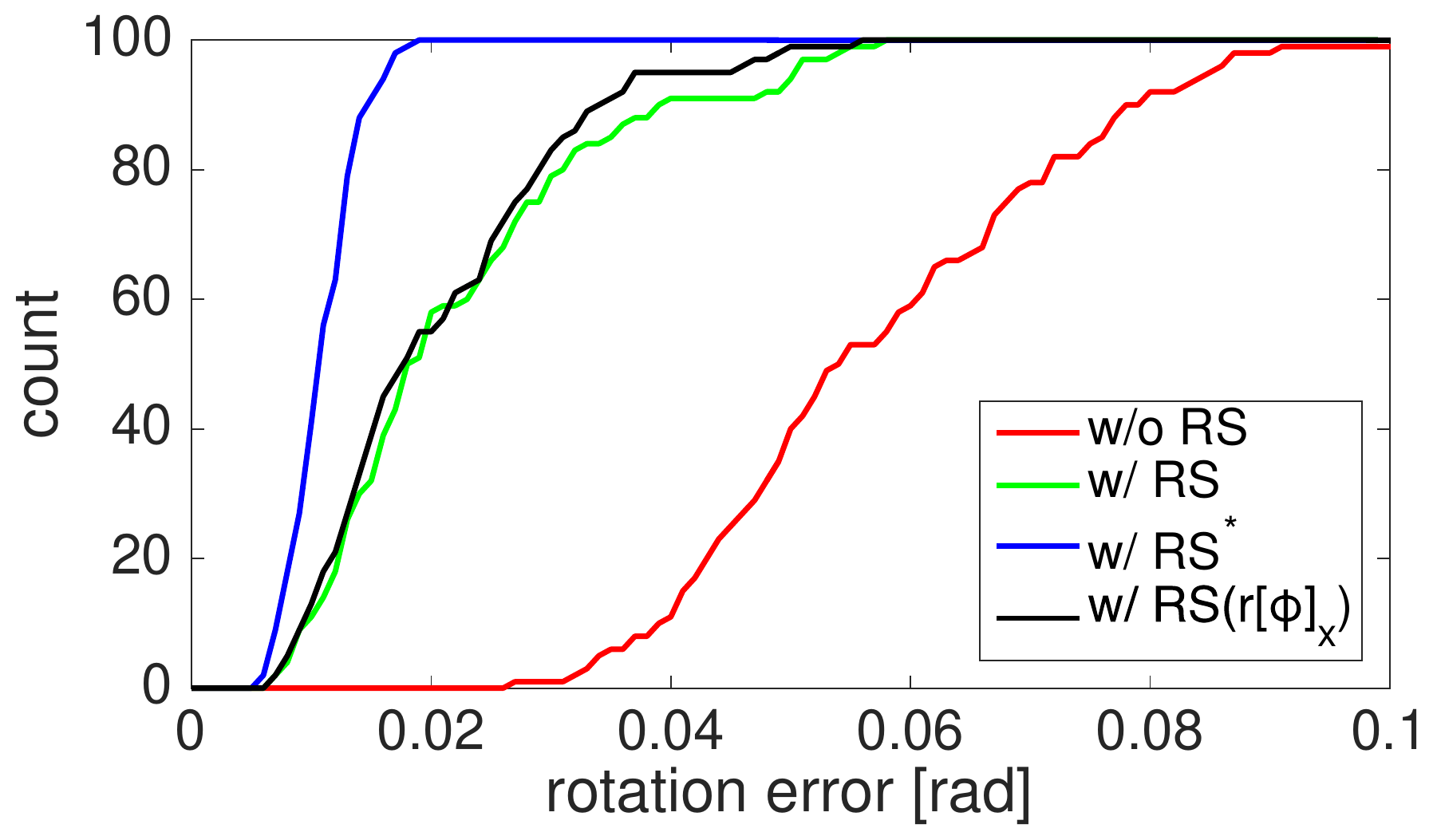}
\hspace*{-4mm}&\includegraphics[width=43.0mm]{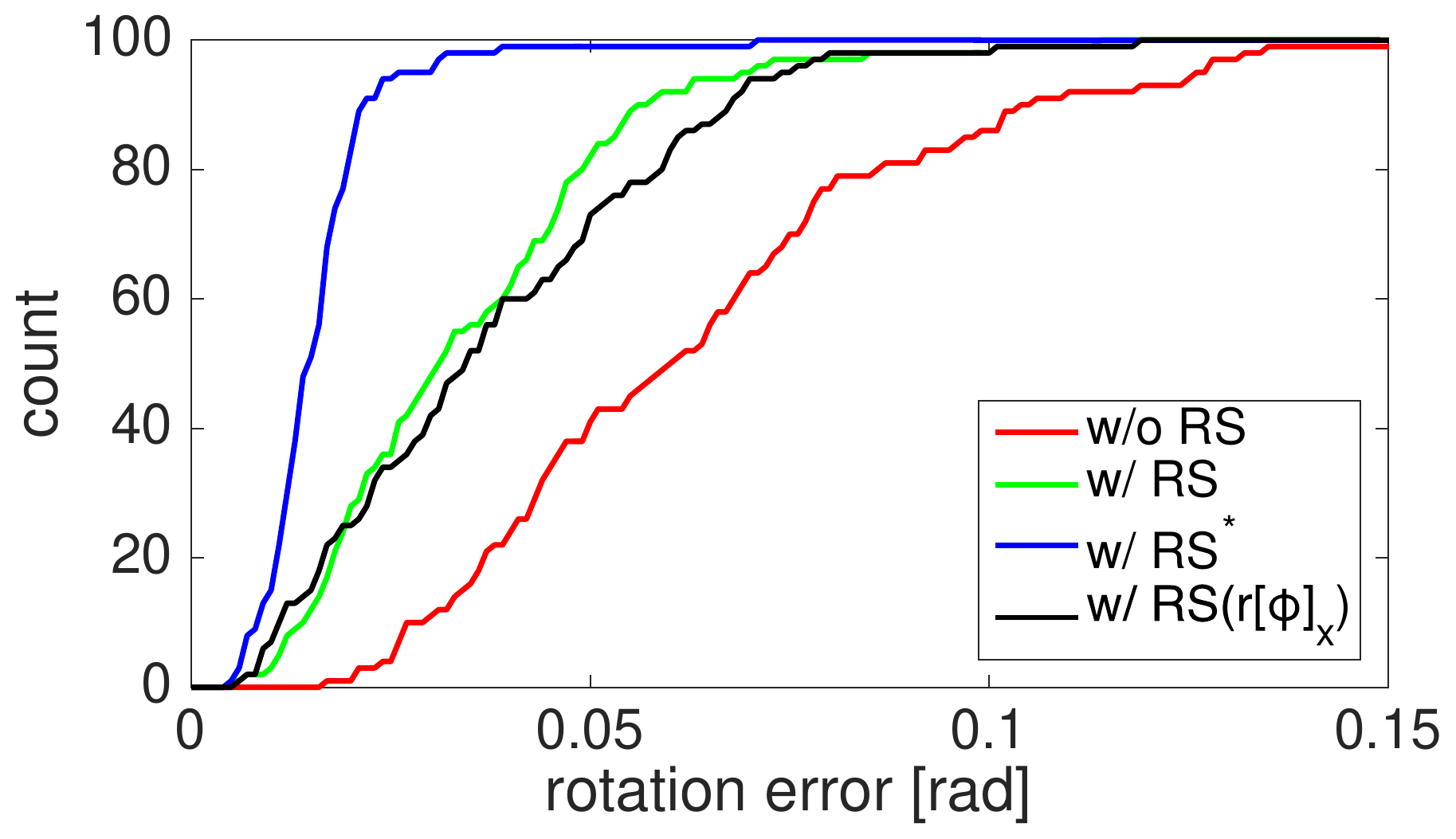}\\

\includegraphics[width=43.0mm]{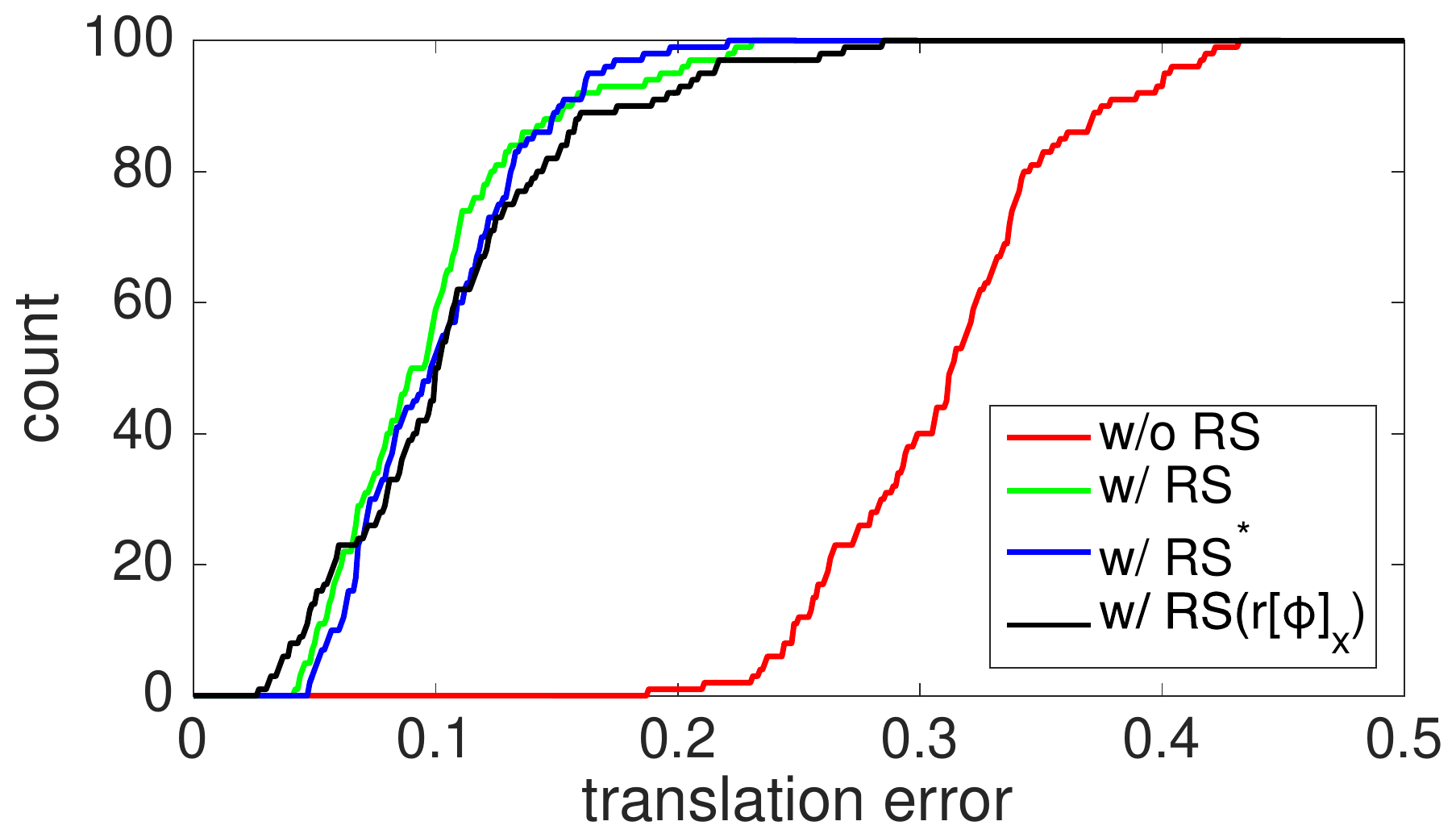}
\hspace*{-4mm}&\includegraphics[width=43.0mm]{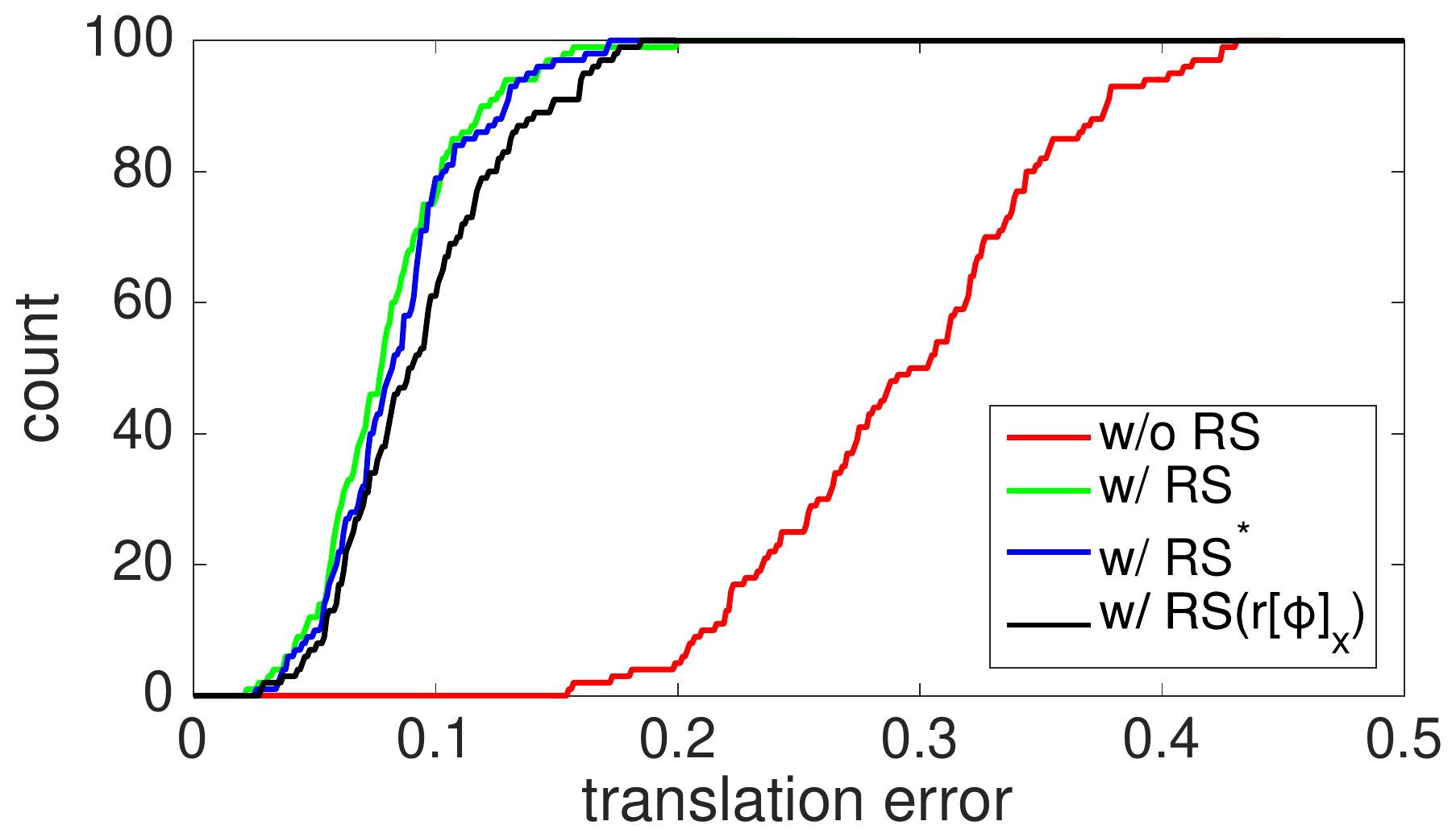}
\hspace*{-4mm}&\includegraphics[width=43.0mm]{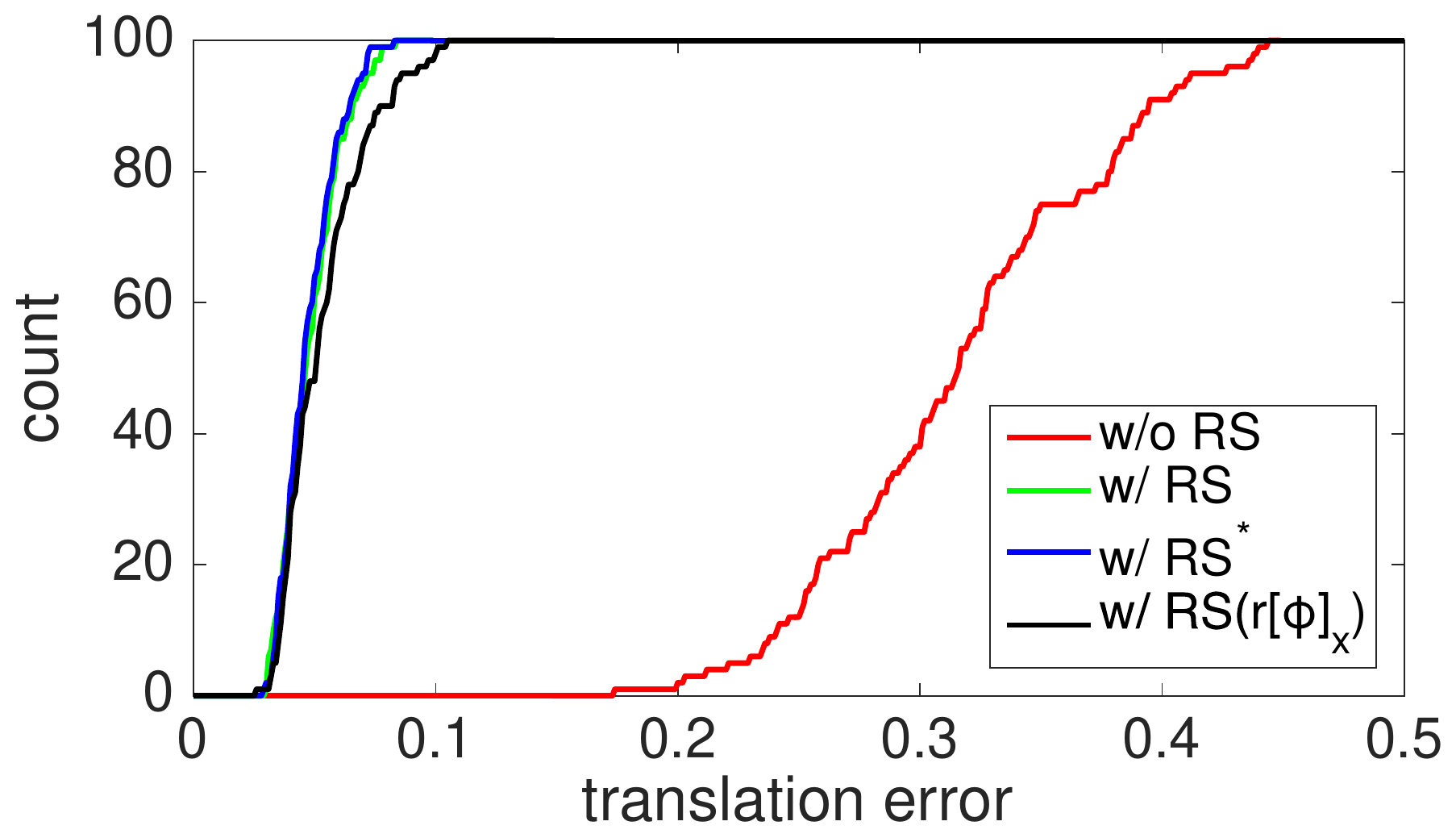}
\hspace*{-4mm}&\includegraphics[width=43.0mm]{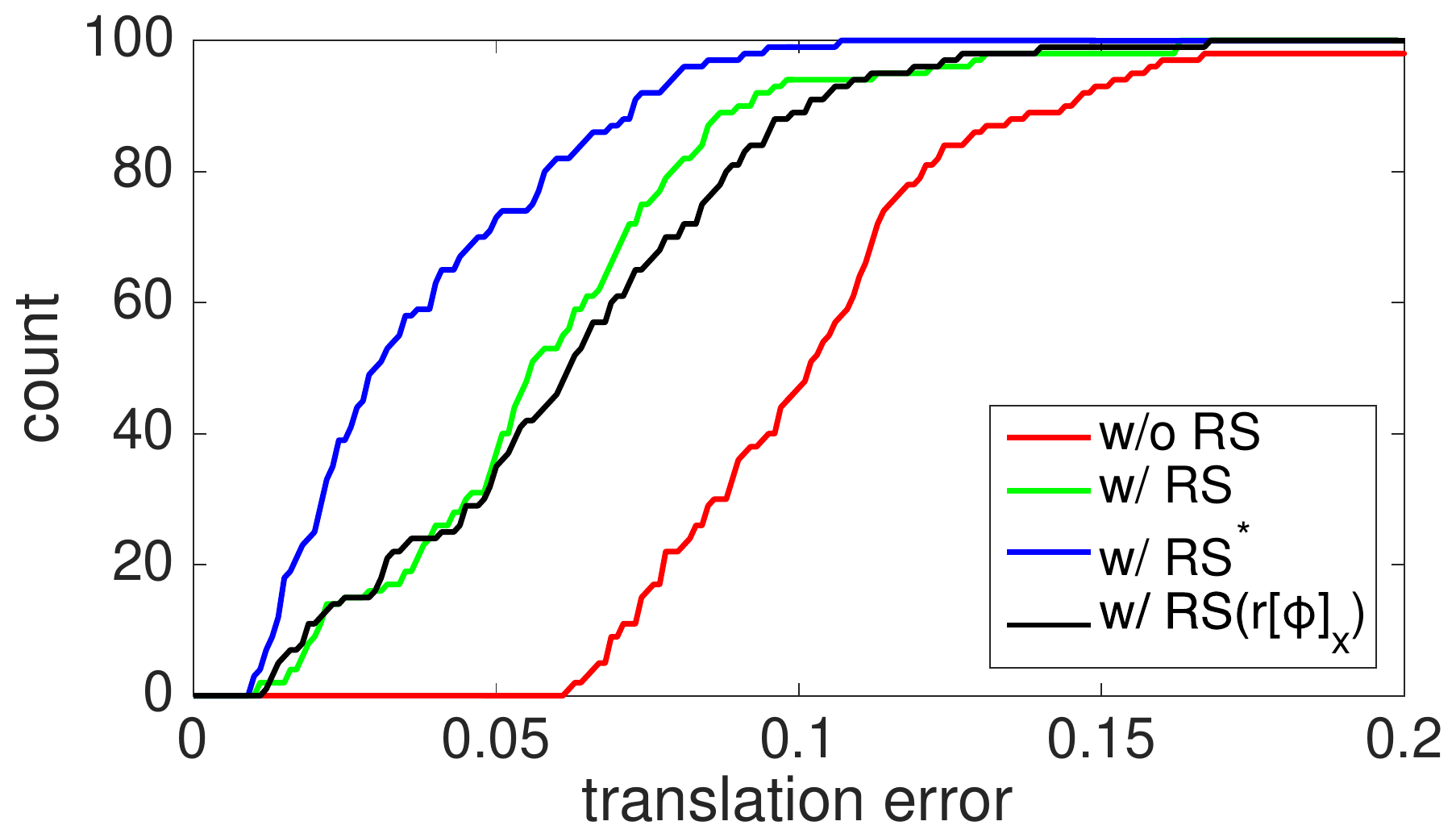}\\

\includegraphics[width=43.0mm]{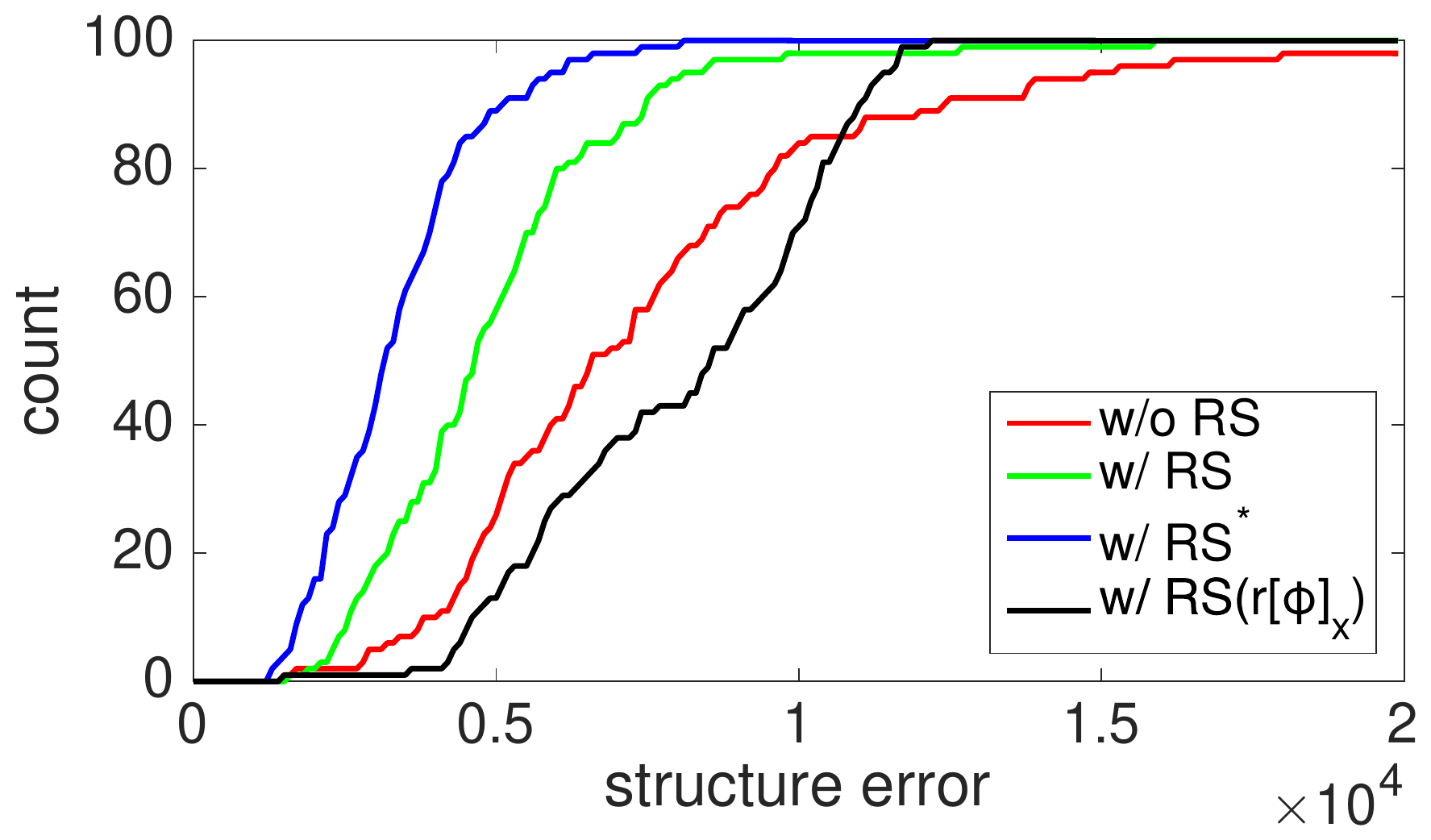}
\hspace*{-4mm}&\includegraphics[width=43.0mm]{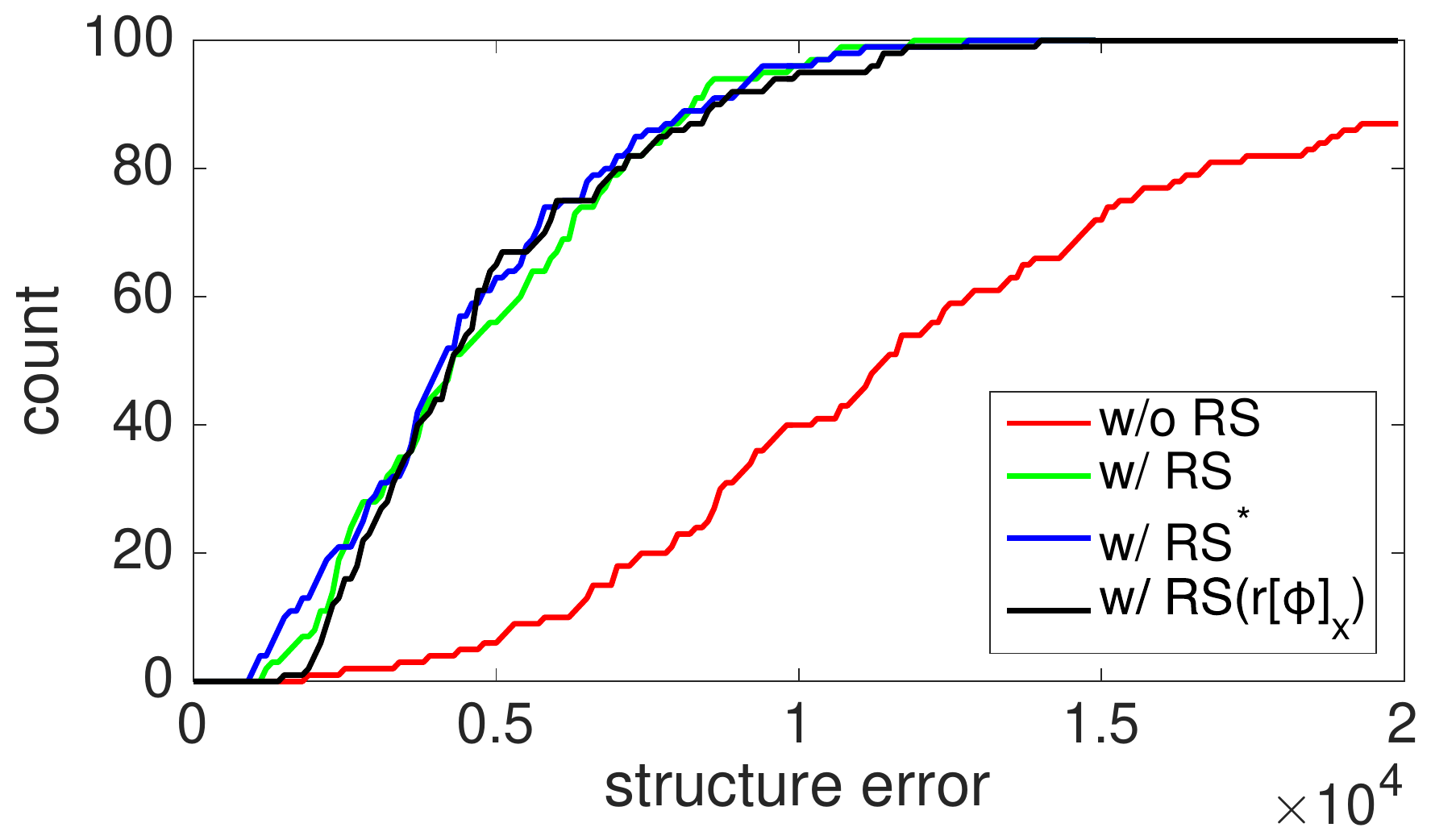}
\hspace*{-4mm}&\includegraphics[width=43.0mm]{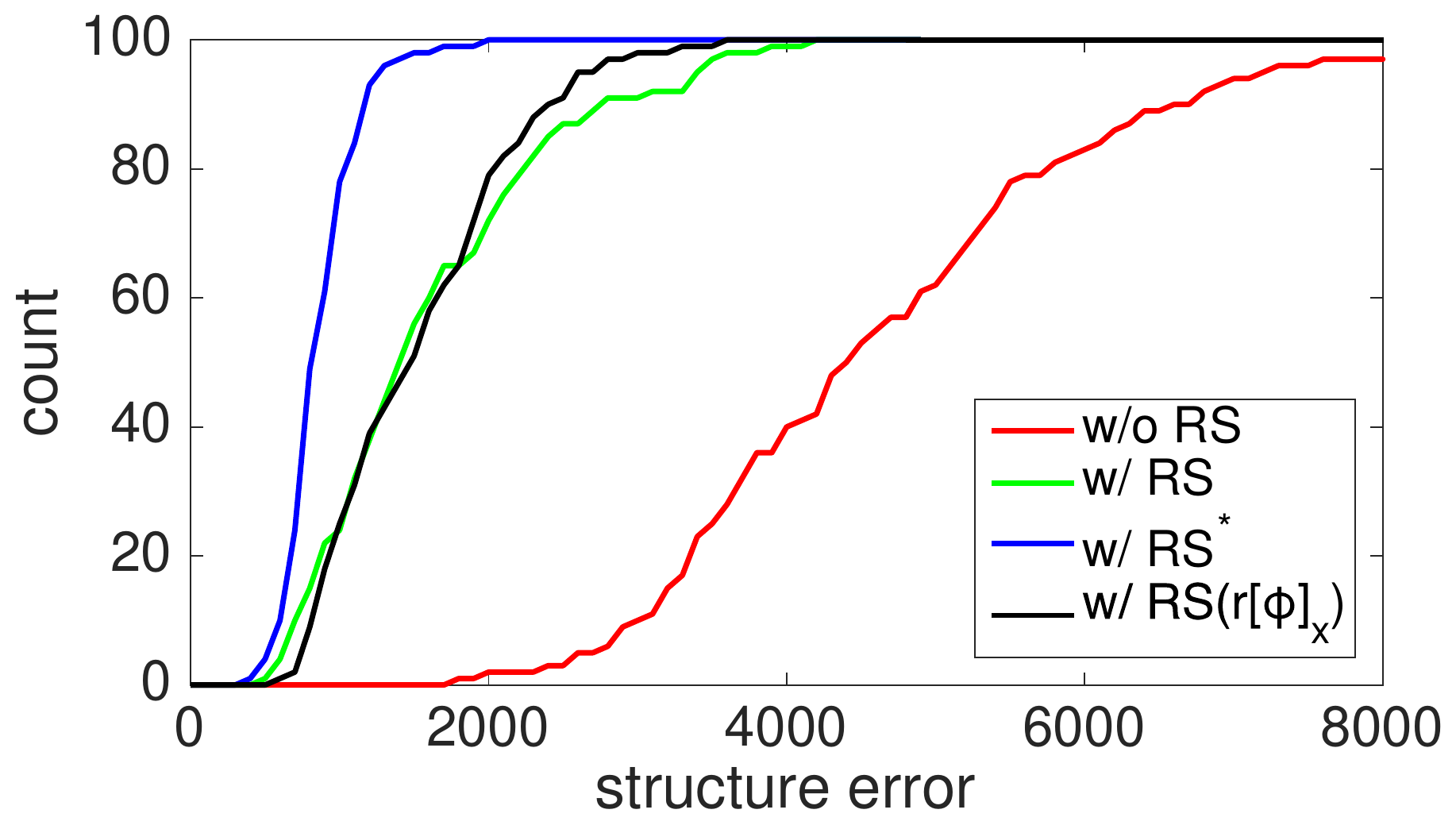}
\hspace*{-4mm}&\includegraphics[width=43.0mm]{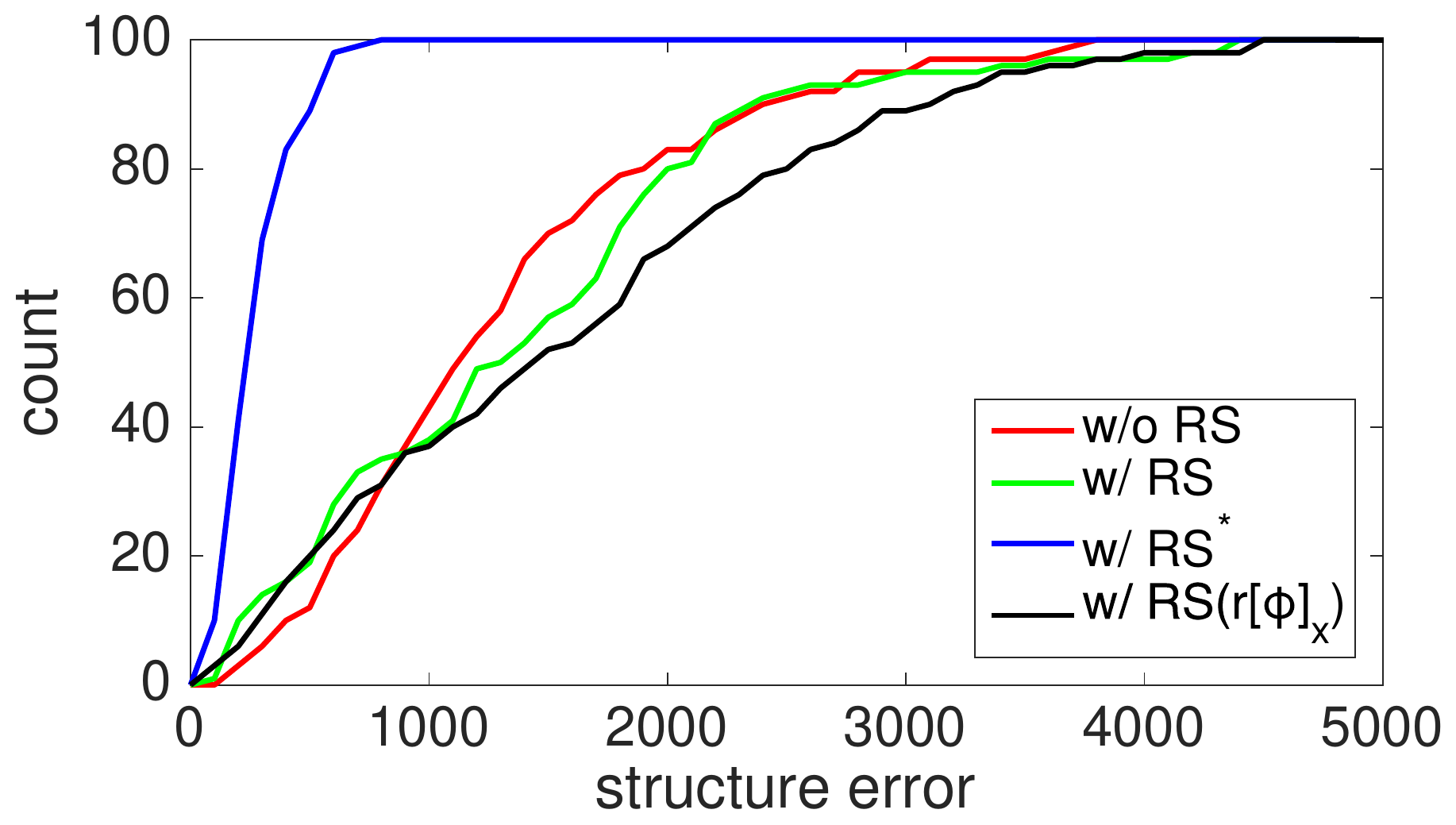}\\

~~~~~~~fountain-P11& ~~~~~Herz-Jesu-P8& ~~~~~castle-P19& ~~~Temple
\end{tabular}

\caption{Results of Experiment I. Cumulative histogram of errors of
  estimated rotation and translation components of camera poses and of
  estimated points. ``w/o RS'' is ordinary BA without
  the proposed RS camera model. ``w/ RS'' and ``w/ RS*'' are BA incorporating the proposed RS model; 
  $\phi^1_1$ is fixed to 0 for the latter. ``w/ RS$(r[\phi]_\times])$'' is BA incorporating the original RS model (\ref{eqn:distantcase}).}
\label{fig:pointexp}
\end{figure*}

Several observations can be made from \Fig{pointexp}. First, it is
observed that the method ``w/ RS*'' that fixes $\phi_1^1$ shows the
best performance for all the sequences. The method ``w/ RS'' rivals
``w/ RS*''' only for Herz-Jesu-P8. This implies that the camera motion
of fountain-P11, castle-P19, and Temple are close to CMSs, whereas
that of Herz-Jesu-P8 is distant from CMSs. In fact, camera
orientations in the former three sequences tend to have small
elevation angles, which is one of the conditions for camera motion to
be the CMS discussed in Sec.\ref{sec:cmsrep}, while those for
Herz-Jesu-P8 tend to have larger elevation angles. Second, it is seen
that ``w/ RS$(r[\phi]_\times])$'' (exact RS model) shows similar
  behaviors to ``w/ RS''. This validates the approximation we used to
  derive our RS model (Sec.\ref{sec:affine}), i.e., affine camera
  approximation and first-order approximation with respect to
  $(\phi_1, \phi_2, \phi_3)$. 

We also show typical reconstruction results for the four sequences in
Figs.\ref{fig:seq1}-\ref{fig:seq7}. It is seen that for each sequence,
the method ``w/ RS*,'' consistently yields the most accurate camera
path and point cloud than others. It is also seen that the method ``w/
RS'' tends to yield structure that is elongated vertically, which
explains the large structure error shown in the cumulative histograms
of \Fig{pointexp}. This structure elongation is well explained by the
CMS described in Proposition \ref{prop3}.
\begin{figure}[bth]
\tiny
\renewcommand{\arraystretch}{0.5}
\includegraphics[width=80mm]{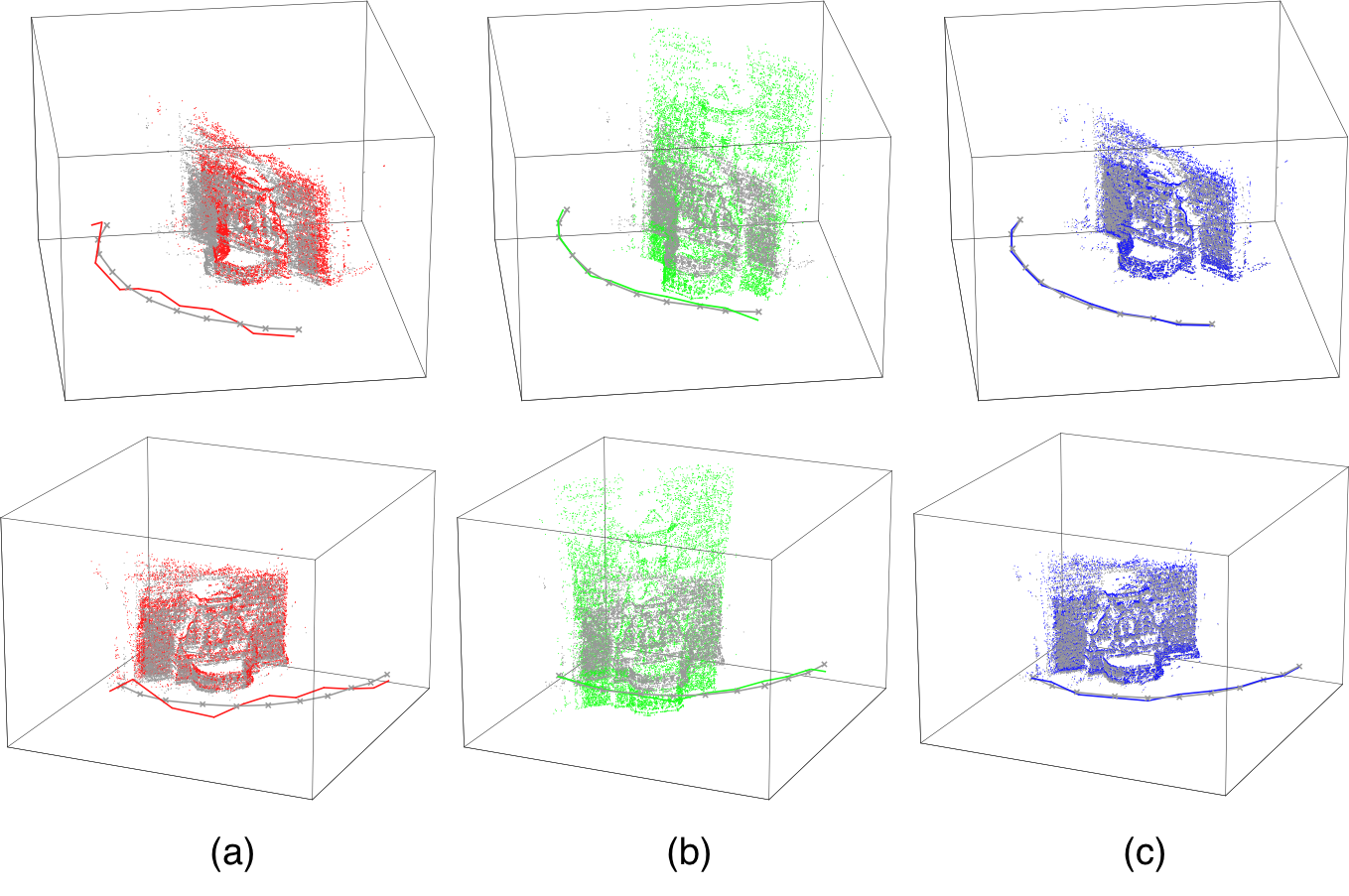}

  \caption{Typical reconstruction results for sequence
    fountain-P11. (a) w/o RS. (b) w/ RS. (c) w/ RS*
  ($\phi_1^1$ fixed). Grey dots and lines with crosses are true scene
  points and true camera positions, respectively. }
  \label{fig:seq1}
\end{figure}

\begin{figure}[bth]
\tiny
\renewcommand{\arraystretch}{0.5}
\includegraphics[width=80mm]{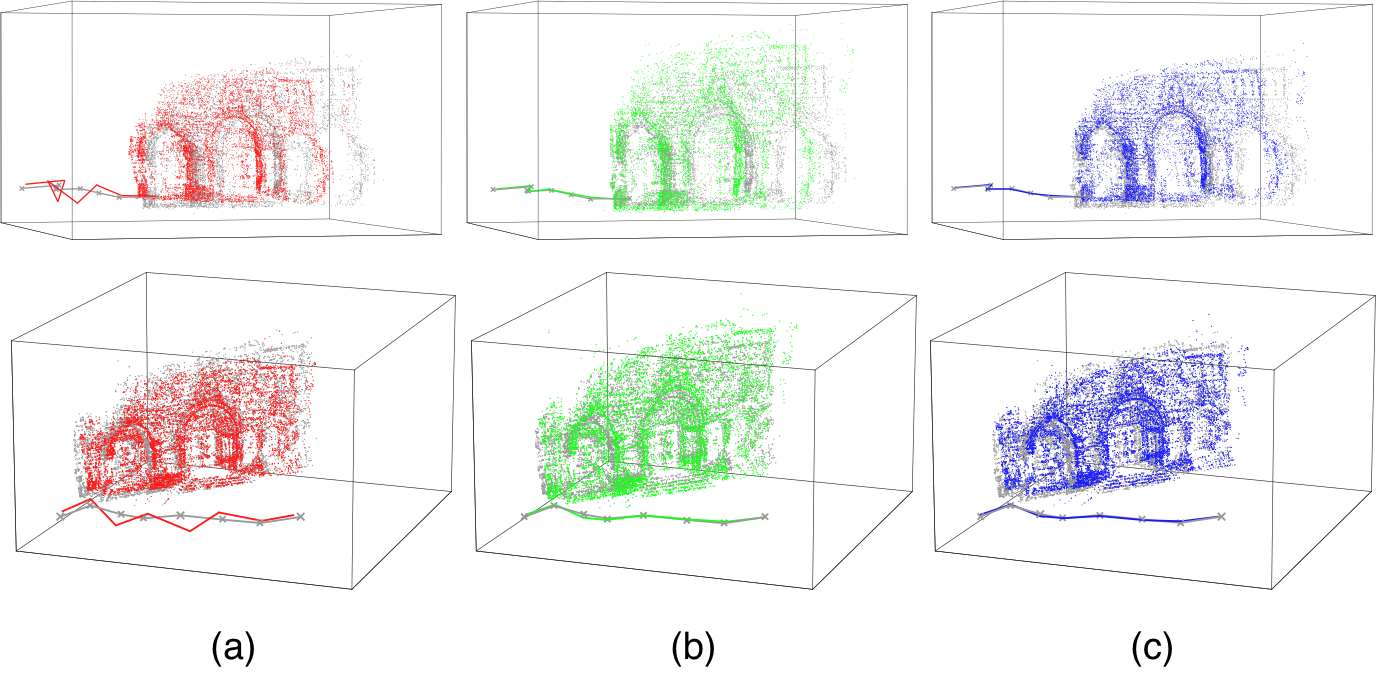}

  \caption{Typical reconstruction results for sequence
    Herz-Jesu-P8. (a) w/o RS. (b) w/ RS. (c) w/ RS*
  ($\phi_1^1$ fixed).}
  \label{fig:seq2}
\end{figure}

\begin{figure}[bth]
\tiny
\renewcommand{\arraystretch}{0.5}
\includegraphics[width=80mm]{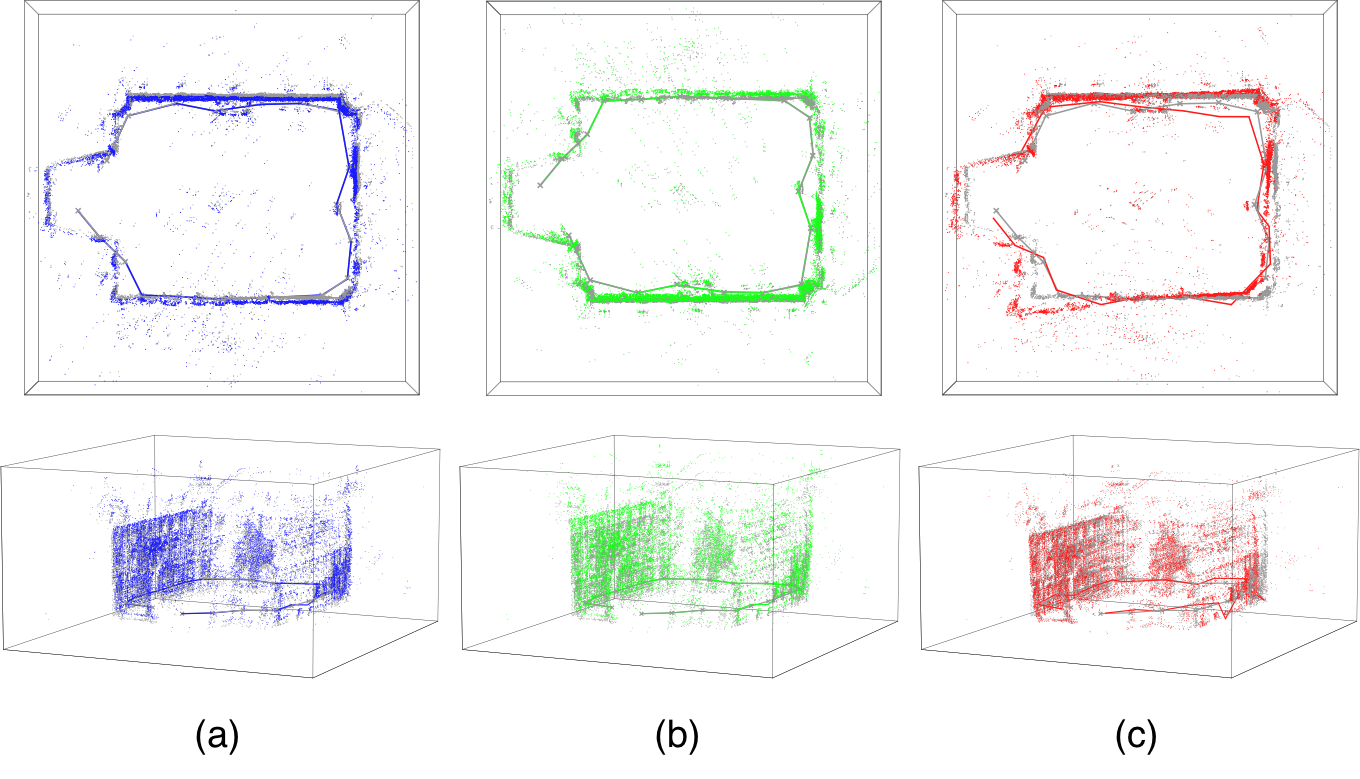}

  \caption{Typical reconstruction results for sequence
    castle-P19. (a) w/o RS. (b) w/ RS. (c) w/ RS*
  ($\phi_1^1$ fixed). }
  \label{fig:seq5}
\end{figure}

\begin{figure}[bth]
\tiny
\renewcommand{\arraystretch}{0.5}
\includegraphics[width=80mm]{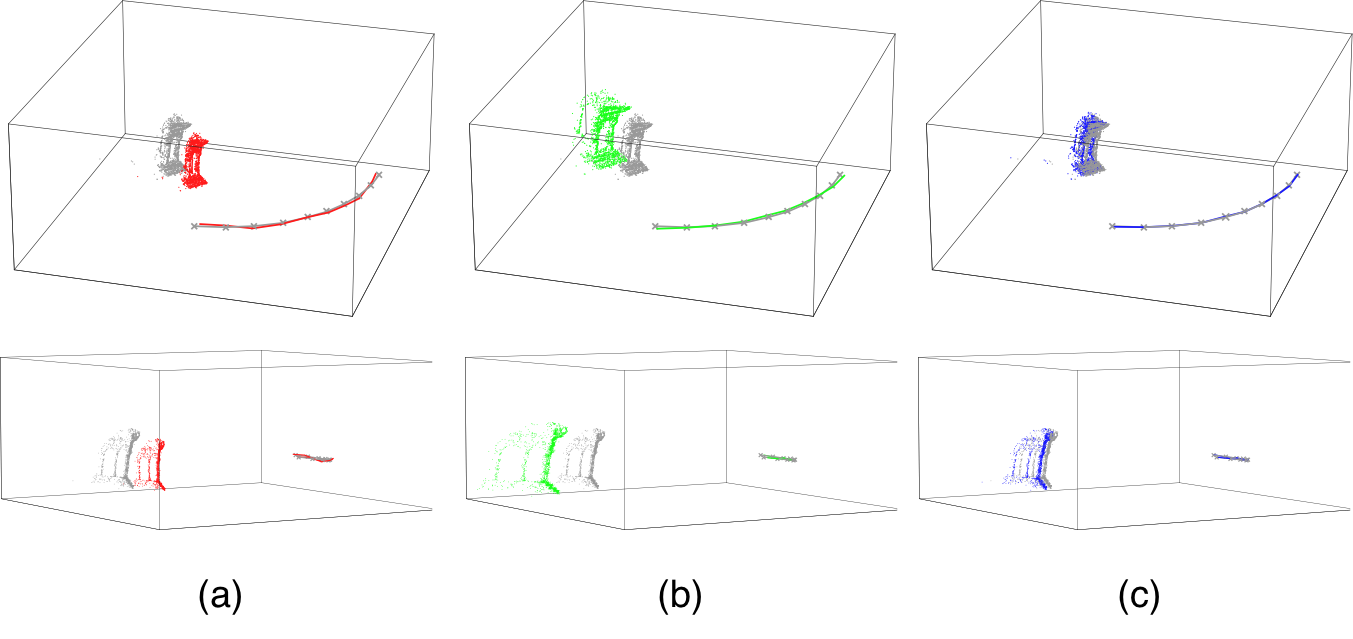}

  \caption{Typical reconstruction results for sequence
    Temple. (a) w/o RS. (b) w/ RS. (c) w/ RS*
  ($\phi_1^1$ fixed).}
  \label{fig:seq7}
\end{figure}


\subsection{Real image experiments}

We conducted experiments using images captured by the camera of a
smartphone (iPhone 5s). We acquired still-images of a scene from a
number of different viewpoints by hand-holding the smartphone. We took
two images at each viewpoint, one by firmly holding the smartphone
with both hands, and the other by deliberately swinging the camera a
little during frame capture. As the smartphone was hand-held, its
orientation cannot be precisely controlled, whereas its position
should not change much between the two acquisitions; the change will
be less than 20cm at each viewpoint. Each scene contains a building
with the length 20-30m, and thus possible position changes will be
negligibly small as compared with the building size. Thus, we use
differences in the camera positions that are estimated by VisualSFM
from these two image sequences as error measure. To be specific,
regarding the camera positions recovered from the distortion-free
sequence as the ground truth, we measure errors of those recovered
from the other sequence with RS distortion. We did not prune any
potentially incorrect matches. We applied two methods, one with the
proposed RS model (with $\phi^1_1$ fixed assuming the first image to
be distortion-free) and the other without the proposed model, and
compared their accuracy. Figure \ref{fig:shrine1} shows results for
two different scenes Shrine1 and Shrine2. (The original images of the
two scenes are shown in the supplementary material.) These results
demonstrate the validity of our simplified RS model in spite of the
employed approximations to derive it.


\begin{figure}[t]
\footnotesize 

\hfil
\hbox{
\includegraphics[width=30mm]{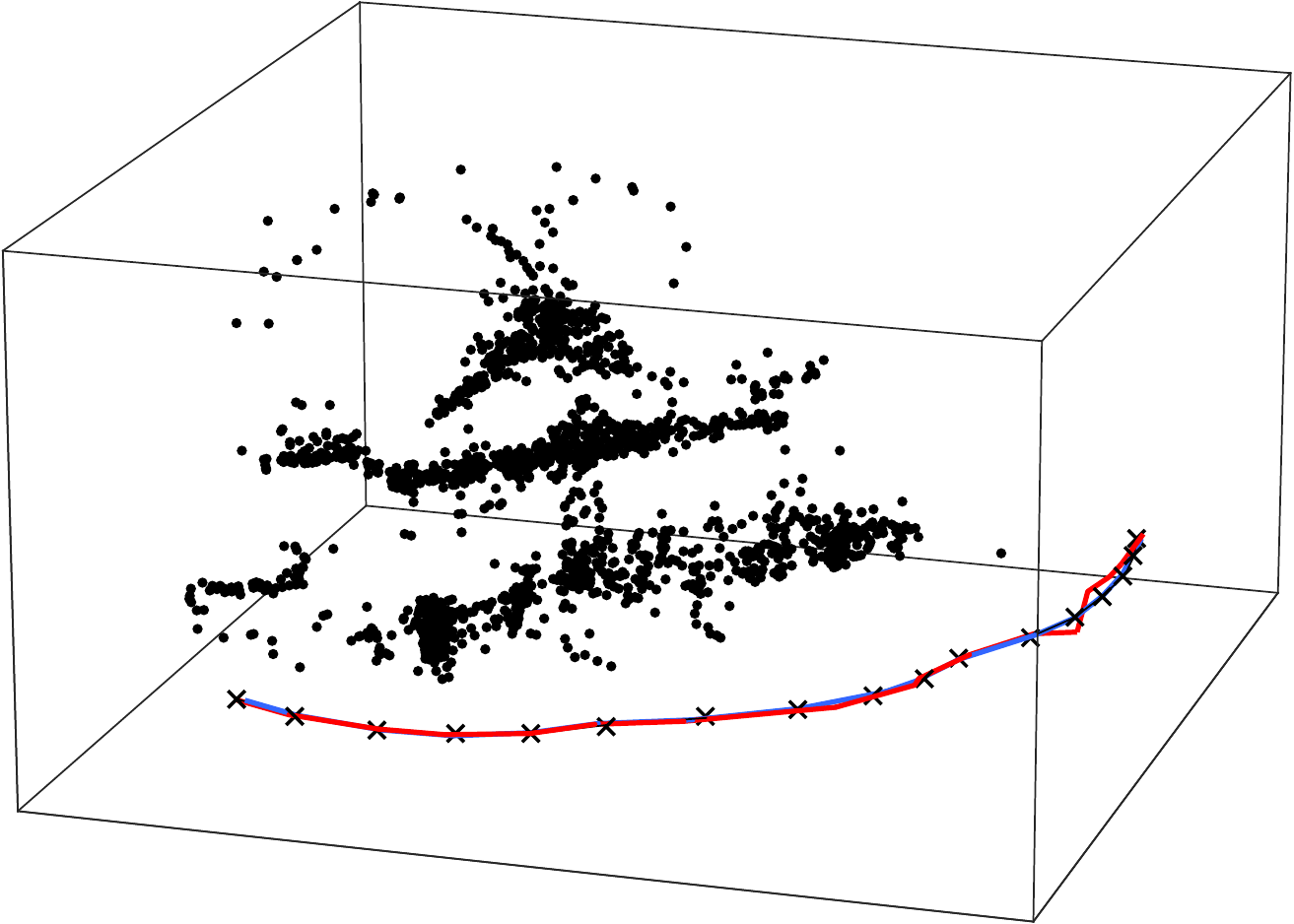}
~~~~~\includegraphics[width=39.5mm]{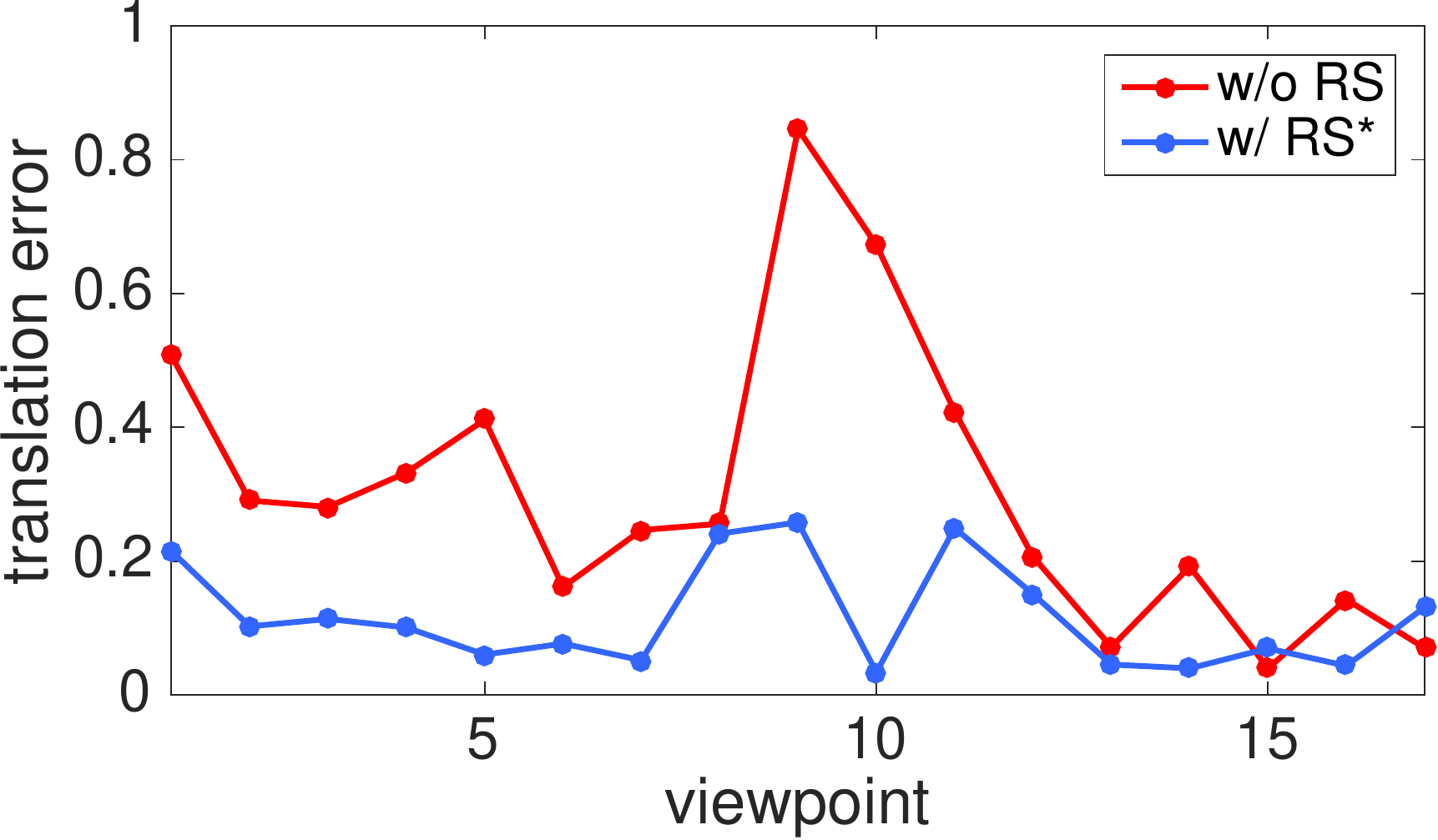}}

\vspace*{2mm}
\hfil
\hbox{
\raisebox{1mm}{\includegraphics[width=31mm]{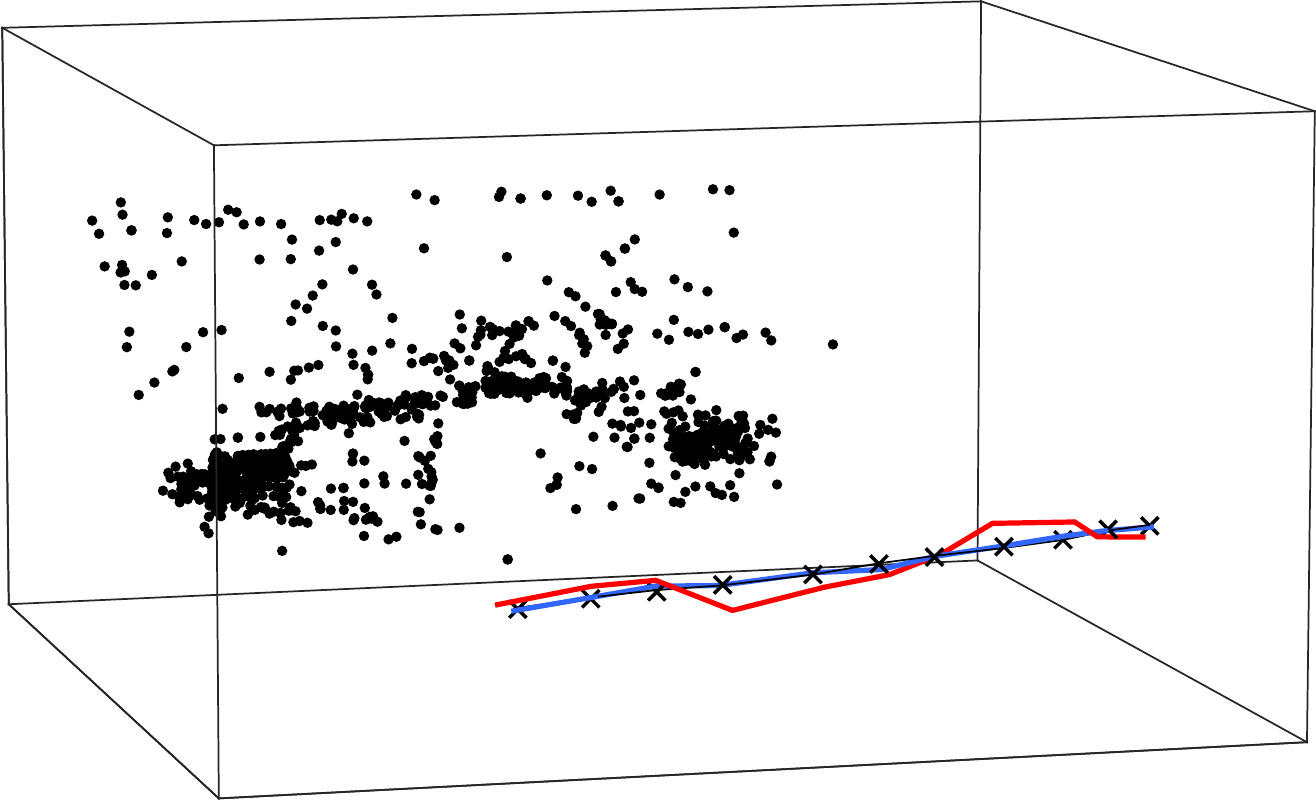}}
~~~~\includegraphics[width=39mm]{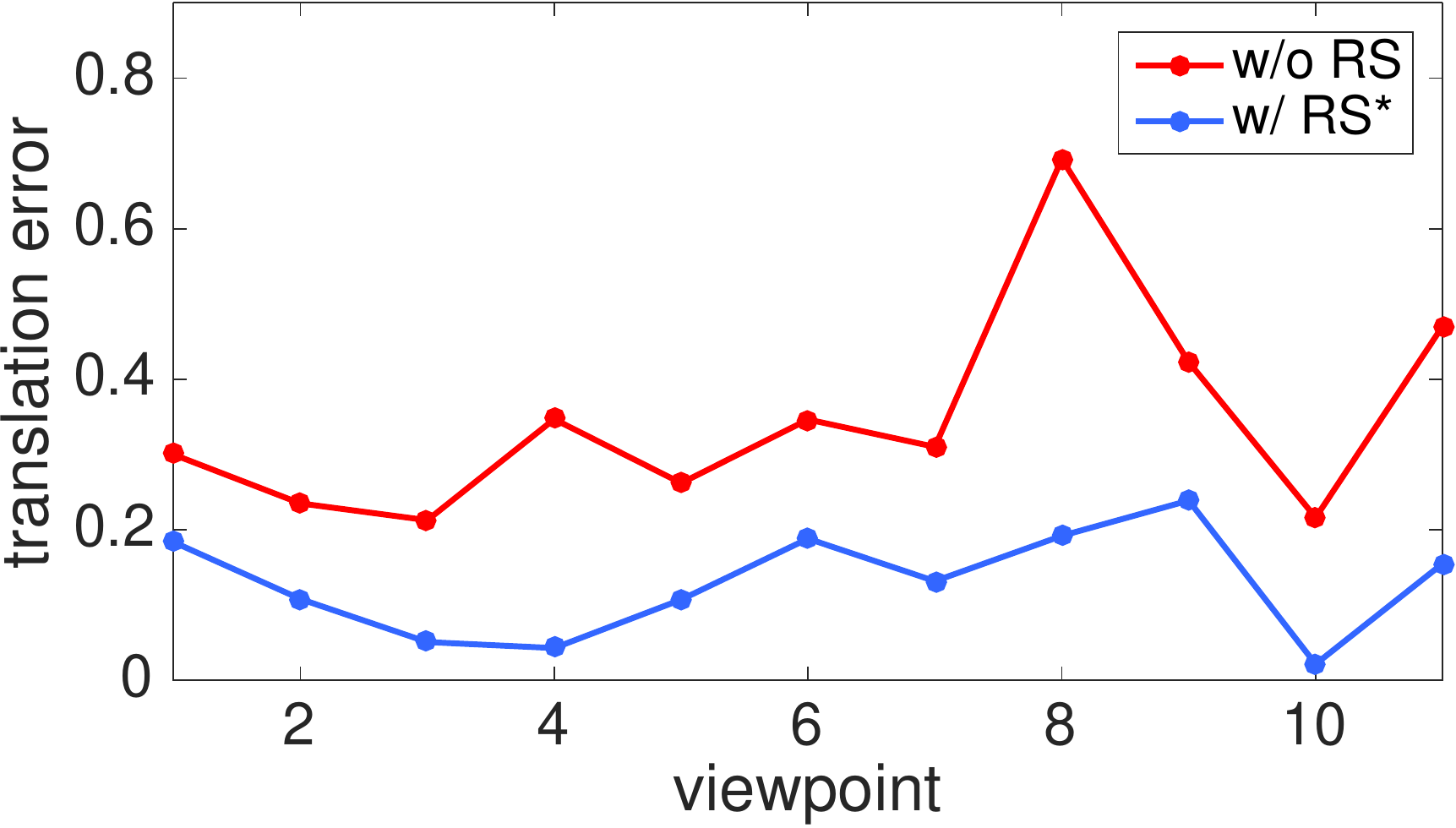}}


\vspace*{2mm}
  \caption{Two results of reconstruction from real images. The camera
    positions obtained from distortion-free images (assumed to be
    ground truths) are shown as black x marks. Those obtained by the
    proposed approach are in blue and those by BA without a RS model
    are in red. Distances from the estimated camera positions to those
    computed from distortion-free images. }
\label{fig:shrine1}
\end{figure}



\section{Summary and conclusions}

We have discussed degeneracy of rolling shutter SfM. Assuming
rotation-only camera motion with linearized rotation, we have shown
that, employing the affine camera approximation (i.e., the first-order
approximation of the perspective effect on the RS distortion), the RS
distortion can be represented by a composition of the following two
transformations: i) two-dimensional projective transformation with two
parameters that can be interpreted as skew and aspect ratio of an
imaginary camera, and ii) one-parameter nonlinear transformation that
can be interpreted as a type of lens distortion of the
camera. Assuming this approximate RS camera model, we have shown that
the problem can be recasted as self-calibration of the imaginary
camera, in which skew and aspect ratio are unknown and varying in the
image sequence. For this self-calibration problem, we have derived a
general representation of CMSs, and also shown a practically important
CMS that was previously reported in the literature. As ambiguity with
the latter CMS can be resolved by specifying only one of the two RS
distortion parameters of a single viewpoint, we have proposed to
identify an image in the sequence that undergoes no distortion and set
these parameters to zero. The experimental results demonstrate the
effectiveness of the proposed approach to CMSs of RS SfM.



{\small
\bibliographystyle{ieee}
\bibliography{rollingshutter}
}

\end{document}